\documentclass[accepted]{uai2026} % after acceptance, for a revised version; 
\PassOptionsToPackage{hyphens}{url} % allow breaks at hyphens too
\usepackage{xurl}                  % much better \url line breaks than url.sty
\usepackage[american]{babel}
\usepackage{natbib} % has a nice set of citation styles and commands
\usepackage{mathtools} % amsmath with fixes and additions
\usepackage{booktabs} % commands to create good-looking tables
\usepackage{tikz} % nice language for creating drawings and diagrams
\usepackage{amsmath,amsthm,amssymb}
\DeclareMathOperator{\supp}{supp}
\theoremstyle{plain}
  \newtheorem{theorem}{Theorem}
  
  \newtheorem{proposition}{Proposition}
  \newtheorem{lemma}{Lemma}
  \newtheorem{corollary}{Corollary}

\theoremstyle{definition}
  \newtheorem{definition}{Definition}

\theoremstyle{definition}
  
  \newtheorem{remark}{Remark}

\usepackage{graphicx}
\usepackage{tikz}
\usetikzlibrary{arrows.meta,positioning,matrix}

\title{What Capable Agents Must Know: Selection Theorems for Robust Decision-Making under Uncertainty}

\author[1]{\href{mailto:<anayebi@cs.cmu.edu>?Subject=Your UAI 2026 paper}{Aran Nayebi}{}}
\affil[1]{%
    Machine Learning Department and Neuroscience \& Robotics Institutes\\
    Carnegie Mellon University\\
    Pittsburgh, Pennsylvania, USA
}
  
\begin{document}
\maketitle

\begin{abstract}
As artificial agents become increasingly capable, what internal structure is \emph{necessary} for an agent to act competently under uncertainty? 
Classical results show that optimal control can be \emph{implemented} using belief states or world models, but not that such representations are required.
We prove quantitative ``selection theorems'' showing that strong task performance (low \emph{average-case regret}) forces world models, belief-like memory and---under task mixtures---persistent regime-tracking variables resembling functional primitives of emotion, along with informational modularity under block-structured tasks.
Our results cover stochastic policies, partial observability, and evaluation under task distributions, without assuming optimality, determinism, or access to an explicit model.
Technically, we reduce predictive modeling to binary ``betting'' decisions and show that regret bounds limit probability mass on suboptimal bets, enforcing the predictive distinctions needed to separate high-margin outcomes. In fully observed settings, this yields approximate recovery of the interventional transition kernel; under partial observability, it implies necessity of predictive state and belief-like memory, addressing an open question in prior world-model recovery work.
\end{abstract}

\section{Introduction}
\label{sec:intro}

What internal structure is \emph{necessary} for an agent to robustly act competently under uncertainty?

Classical results in control and reinforcement learning show that optimal behavior can be implemented using belief states or world models \citep{sondik1971,kaelbling1998}. 
These results are constructive: they show that an optimal controller \emph{can} be expressed as a function of a sufficient statistic. 
They do not establish that predictive internal state is \emph{required}. 
An architecture might be capable of belief-based control without being forced to implement predictive structure by the demands of its task distribution. 
Our aim is to close this gap, in the sense of ``selection-style'' arguments articulated by~\citet{wentworth2021selection}.

Across decision theory, control, and learning theory, broad performance requirements often imply structural constraints. 
Classical representation theorems show that agents satisfying rationality axioms behave \emph{as if} maximizing expected utility~\citep{von1947theory,savage1954foundations}, and later axiomatic work~\citep{karny2012axiomatisation,karny2020axiomatisation} studies the ordering of policies in closed-loop dynamic decision problems via local functionals, showing that such orderings induce probabilistic modeling of uncertainty in the optimized decision process.
%where objectives are encoded as ideal trajectory distributions and optimal strategies minimize KL divergence to those ideals (though specifying such ideals may be nontrivial in practice).
The Good Regulator Theorem asserts that regulation requires modeling the system \citep{conant1970every}, a requirement formalized in linear control by the Internal Model Principle \citep{francis1976internal}.
No-regret guarantees constrain the information needed to avoid systematic loss \citep{blackwell1956analog,foster1997calibrated}. 
However, these approaches either rely on strong axioms, target specialized and exactly optimal regulation settings, or stop short of representation-level necessity conclusions.

\begin{figure}[t]
\centering
\resizebox{\linewidth}{!}{%
\begin{tikzpicture}[
    font=\footnotesize,
    >=Stealth,
    task/.style={
      draw,
      rounded corners=2pt,
      align=center,
      text width=30mm,
      minimum height=8.5mm,
      inner sep=2.5pt
    },
    resultbox/.style={
      draw,
      rounded corners=2pt,
      align=center,
      text width=30mm,
      minimum height=8.5mm,
      inner sep=2.5pt
    },
    head/.style={
      align=center,
      font=\bfseries\small
    },
    title/.style={
      align=center,
      font=\bfseries\small
    },
    arrow/.style={->, thick}
]

% Top annotation
\node[title] at (2.7,0.65)
  {Selection depends on the task family};

% Headers
\node[head] (h1) at (0,0) {Diagnostic task family};
\node[head] (h2) at (5.4,0) {Internal structure selected};

% Left column
\node[task] (t1) at (0,-0.9)
  {Fully observed\\ action-conditioned bets\\ (\S4, Thm.~1)};

\node[task] (t2) at (0,-2.15)
  {Threshold bets on\\ predictive tests\\ (\S5.2, Thms.~2-4)};

\node[task] (t3) at (0,-3.4)
  {Paired-history\\ distinguishing tests\\ (\S5.3, Thm.~5)};

\node[task] (t4) at (0,-4.65)
  {Block-structured\\ test families\\ (Cor.~3)};

\node[task] (t5) at (0,-6.0)
  {Regime-shift / mixture\\ test families\\ (Cor.~4)};

\node[task] (t6) at (0,-7.45)
  {Complete minimal\\ test families\\ (Cor.~5)};

% Right column
\node[resultbox] (o1) at (5.4,-0.9)
  {World model /\\ transition knowledge};

\node[resultbox] (o2) at (5.4,-2.15)
  {Predictive state\\ (PSR coordinates)};

\node[resultbox] (o3) at (5.4,-3.4)
  {Memory separating\\ aliased histories};

\node[resultbox] (o4) at (5.4,-4.65)
  {Modular internal\\ structure};

\node[resultbox] (o5) at (5.4,-6.0)
  {Persistent regime\\ variable\\ (tradeoff-tracking)};

\node[resultbox] (o6) at (5.4,-7.45)
  {Representational\\ convergence\\ (invertible recoding)};

% Arrows
\draw[arrow] (t1.east) -- (o1.west);
\draw[arrow] (t2.east) -- (o2.west);
\draw[arrow] (t3.east) -- (o3.west);
\draw[arrow] (t4.east) -- (o4.west);
\draw[arrow] (t5.east) -- (o5.west);
\draw[arrow] (t6.east) -- (o6.west);

\end{tikzpicture}%
}
\caption{Different diagnostic task families select different internal structure, from world models and predictive state to memory, modularity, persistent regime variables, and representational convergence.}
\label{fig:selection-overview}
\end{figure}

\textbf{Our contribution.}
We prove quantitative selection theorems showing that low \emph{average-case regret} on structured families of action-conditioned prediction tasks forces an agent to implement predictive, structured internal state (visualized in Fig.~\ref{fig:selection-overview}).

Our technical approach reduces predictive modeling to binary ``betting'' goals. 
A regret decomposition shows that average normalized regret bounds directly control the probability mass assigned to suboptimal bets. 
When the evaluation distribution places nontrivial mass on large-margin tests, this forces the agent’s internal memory to refine the predictive partition induced by those tests (Theorems~\ref{thm:fo_avg_stoch}–\ref{thm:memory}). 
In fully observed environments, this yields approximate recovery of the interventional transition kernel (Corollary~\ref{cor:causal_content_expanded}); in partially observed environments, it yields quantitative no-aliasing bounds for belief-like memory, addressing an open question posed by~\citet{richens2025}. 
We also show that~\citet{pearl2009causality} Level 2 interventions are recoverable, but Level~3 counterfactuals are not (Corollary~\ref{cor:no_L3}).

Our results differ from recent world-model recovery work~\citep{richens2024robust,richens2025} in three key respects: \textbf{(i)} we assume only average-case regret rather than worst-case optimality; \textbf{(ii)} our results hold under stochastic policies, which have both had a long history in reinforcement learning~\citep{witten1977adaptive,williams1992simple,sutton1998reinforcement} and are commonly used in modern deep learning algorithms, such as the Dreamer family ~\citep{hafner2019dream,hafner2020mastering,hafner2023mastering,hafner2025training}, PPO~\citep{schulman2017proximal}, along with many others (e.g.~\citep{hansen2023td,wang2024efficientzero} to name a few); and \textbf{(iii)} unlike their work and later recent extensions, we derive necessity results under \emph{partial observability} rather than focusing solely on explicit recovery in fully observed settings~\citep{khetarpal2026affordances,harwood2026information} or under fully observed goals~\citep{cifuentes2026worldmodels}, directly addressing an \emph{open question} raised by~\citet{richens2025}.

\textbf{Structure from task families.}
Beyond predictive modeling and memory, we also show that structured evaluation distributions impose further constraints. 
Block-structured tests select for informational modularity (Corollary~\ref{cor:modularity}); mixtures of regimes select for regime-sensitive internal state (Corollary~\ref{cor:mixtures}); and under minimality assumptions, any two vanishing-regret agents must \emph{representationally converge} on decision-relevant partitions up to invertible recoding (Corollary~\ref{cor:rep_match}). 

Taken together, these results formalize a simple principle:
\begin{quote}
Robust generalization under uncertainty selects for the predictive internal structure tested by the evaluation task family.
\end{quote}
They separate representation \emph{necessity} from representation \emph{recovery} and provide a regret-based route from empirically meaningful competence guarantees on \emph{specified} task families to concrete constraints on internal organization.
After all, no representation theorem can force an agent to distinguish internal states that are never tested by the goals.

\section{Related Work}
Our results are framed in the standard POMDP setting, where posterior belief is a sufficient statistic for optimal control \citep{sondik1971,kaelbling1998}. 
However, these classical results are constructive: they show that optimal behavior \emph{can} be expressed in terms of belief, not that predictive state is \emph{required}.

\citet{bennett2023emergent,bennett2023optimal,bennett2024complexity, bennett2025build,bennett2025formal,bennett2026regret} develops a distinct weakness-maximization framework in which successful adaptation is argued to favor causal-identity constructions separating intervention from observation; however, unlike our regret-based selection theorems, it proceeds via task extensions and additional exchangeability and representation/incentive assumptions, and does not establish direct analogues of our quantitative recovery, partial-observability necessity, or representational convergence results.
Recent philosophical work~\citep{herrmann2026bayesian} has also explored reducing interventional reasoning to probabilistic reasoning over enriched variable spaces, though in a distinct formal setting from the agent-based, regret-theoretic framework considered here.

Our notion of tests follows predictive-state representations (PSRs) \citep{littman2001psr,singh2004psr,boots2011psr}, which represent state via predictions of action-conditioned futures rather than latent variables. 
Unlike the PSR literature, which treats predictive state as sufficient for control, we derive it as \emph{necessary}: low \emph{average-case} regret on action-conditioned prediction tasks forces an agent to compute the predictive distinctions needed to separate high-margin outcomes.
Technically, our core inequality instantiates a standard margin-style regret decomposition \citep{bartlett2006}, but uses it to derive representation-theoretic constraints rather than supervised generalization guarantees.
The betting-goal reduction is related to elicitation, proper scoring rules, and game-theoretic/imprecise probability \citep{savage1971scoring,gneiting2007,dempster2008upper, shafer2005probability}, though we use precise success probabilities rather than truthful reports or lower/upper-probability protocols.

Our work is complementary to \citet{richens2024robust,richens2025}, who show that under strong competence assumptions in fully observed environments one can \emph{recover} a transition model from an agent's policy.
We instead study stochastic policies, partial observability, and \emph{average-case} regret over a distribution of prediction tasks, and derive necessity results rather than recovery procedures. 
In particular, we extend our selection argument to partially observed environments, giving quantitative no-aliasing bounds for belief-like memory---addressing an open question raised by \citet{richens2025}.

\section{Notation and Constants}
\label{sec:notation}
Consider a one-step decision between two actions $L$ and $R$ with success probabilities $u_L,u_R\in[0,1]$.
Let a (possibly stochastic) policy choose $L$ with probability $q\in[0,1]$ and $R$ with probability $1-q$.
Then the achieved success probability is
\begin{equation}\label{eq:success}
\begin{split}
V &= \Pr(\text{success}\mid L)\Pr(L) + \Pr(\text{success}\mid R)\Pr(R)\\ 
&= q\,u_L + (1-q)\,u_R.
\end{split}
\end{equation}
Let the optimal success probability be denoted as $V^\star:=\max_{q \in [0,1]} V$.

Define the normalized regret as
\begin{equation}\label{eq:regret-def}
\delta:=1-\frac{V}{V^\star},
\end{equation}
assuming $V^\star>0$ (this is without loss of generality, as a $V^\star=0$ will imply that the goal is trivially unsatisfiable).

For any $\gamma\in(0,\tfrac12)$, with $\gamma \to 1/2$ only as a limit, define the following constants, which will be used throughout:
\begin{equation}\label{eq:constants}
c(\gamma):=\frac{4\gamma}{1+2\gamma},\qquad t_\gamma:=\sqrt{\frac{1+2\gamma}{1-2\gamma}}\;\ge 1.
\end{equation}

\section{World model recovery in fully observed environments}
Let $E=(\mathcal S,\mathcal A,P,\mu_0)$ be an environment with finite state space $\mathcal S$ and action space $\mathcal A$, with $|\mathcal A| \ge 2$, where $P(s'\mid s,a)$ denotes the one-step transition probabilities and $\mu_0$ the initial-state distribution.
We assume the environment is fully observed (the agent observes $s,s'$ exactly), stationary (transition probabilities do not drift over time), and that actions influence transitions (i.e., there exist $s,a,a',s'$ such that $P(s'\mid s,a)\neq P(s'\mid s,a')$).
Additionally, we assume the environment is communicating, meaning that for any $s,s'\in\mathcal S$ there exists a finite action sequence that reaches $s'$ from $s$ with positive probability, ensuring the agent can in principle carry out the diagnostic goals from any start state, thereby ruling out environments with permanently isolated regions (rather than realistic control problems).

We now can define the goal family. 
Specifically, we define our bets over the agent's successful completion of it:
\label{sec:wm-fo}
\begin{definition}[Composite goal family $G^{(n)}_{s,a,s',k}$]\label{def:goal}
Fix $s,s'\in\mathcal S$, an action to be tested $a\in\mathcal A$, an integer $n\ge 1$, and a threshold $k\in\{0,1,\dots,n\}$.
Pick any two initial \emph{marker} actions $L,R\in\mathcal A$ (used only to select a branch at $t=0$).

For an infinite trajectory $\tau=(S_0,A_0,S_1,A_1,\dots)$ define the attempt times
\begin{equation*}
\begin{split}
&T_1(\tau):=\inf\{t\ge 1:\ S_t=s,\ A_t=a\},\\
&T_{i+1}(\tau):=\inf\{t>T_i(\tau):\ S_t=s,\ A_t=a\},
\end{split}
\end{equation*}
with the convention $\inf\emptyset=\infty$.
Thus, $T_i$ is the $i$-th occurrence of $(S_t = s, A_t =a)$ along $\tau$, if it occurs; otherwise, $T_i = \infty$. 

Define success indicators
\begin{equation*}
\begin{split}
& X_i(\tau):=\mathbf 1\{T_i(\tau)<\infty\ \wedge\ S_{T_i(\tau)+1}=s'\},\\
& N_n(\tau):=\sum_{i=1}^n X_i(\tau).
\end{split}
\end{equation*}
Thus, $X_i$ is the indicator that the $i$th execution of $(s,a)$ transitions to $s'$, and $N_n$ is the total number of such successful transitions to $s'$ across the first $n$ attempts.
(We omit the dependence on $s, a, s', k$ just to keep the notation from being overloaded.)

The \emph{composite goal} $G^{(n)}_{s,a,s',k}$ is the event:
\begin{equation}\label{eq:composite-goal}
\begin{split}
&\Big(A_0=L \ \wedge\ T_i(\tau)<\infty\ \forall i\le n \ \wedge\ N_n(\tau)\le k\Big)\\
&\bigvee\Big(A_0=R \ \wedge\ T_i(\tau)<\infty\ \forall i\le n \ \wedge\ N_n(\tau)> k\Big).
\end{split}
\end{equation}
For convenience, we will write $G^{(n)}_{s,a,s',k} = G^{(n,1)}_{s,a,s',k} \lor G^{(n,2)}_{s,a,s',k}$, where $G^{(n,1)}_{s,a,s',k}$ and $G^{(n,2)}_{s,a,s',k}$ are the first and second disjuncts, respectively.

\textbf{Interpretation:} The goal $G^{(n)}_{s,a,s',k}$ forces a one-shot binary commitment at time $t=0$: choosing $A_0=L$ commits to the branch ``at most $k$ successes'', while choosing $A_0=R$ commits to the branch ``more than $k$ successes''.
After this commitment, the agent must generate $n$ \emph{attempts} to execute $(S_t=s,A_t=a)$; the $i$th such attempt occurs at time $T_i$.
Each attempt counts as a \emph{success} if it transitions to $s'$ on the next step, i.e. $S_{T_i+1}=s'$, and $N_n$ counts the number of successes in $n$ attempts.
Thus $G^{(n)}_{s,a,s',k}$ is an either-or test about whether the transition $(s,a)\to s'$ happens ``rarely'' ($\le k$ times) or ``often'' ($>k$ times) across $n$ attempts.
Equivalently, at $t=0$ the agent chooses between two incompatible branches:
\emph{(i)} ``$\le k$ successes in $n$ attempts of $(s,a)\to s'$'' (signaled by $A_0=L$) or \emph{(ii)} ``$>k$ successes in $n$ attempts'' (signaled by $A_0=R$).
\end{definition}

Next, we deal with the fact that under a stochastic policy, taking either action $L$ or $R$ is actually a mixture of the two.

\begin{lemma}[Binary-decision regret controls wrong-action mass]\label{lem:bet}
Define the \emph{wrong-action mass}
\begin{equation*}
w \;:=\;
\begin{cases}
1-q,&\text{if }u_L\ge u_R \ (\text{$L$ is optimal}),\\
q,&\text{if }u_R>u_L \ (\text{$R$ is optimal}).
\end{cases}
\end{equation*}
Then the normalized regret $\delta$ is equivalent to:
\begin{equation}\label{eq:regret}
\delta \;=\; w\cdot \frac{|u_L-u_R|}{\max\{u_L,u_R\}}.
\end{equation}
In the special \emph{betting} case where $u_L$ and $u_R$ are complementary, namely $u_R:=1-u_L$, defining the margin
$m:=|u_L-\tfrac12|$, we obtain
\begin{equation}
\label{eq:bet-regret}
\delta
= w \cdot \frac{4m}{1+2m}.
\end{equation}
Consequently, on the event $m\ge\gamma\in(0,\tfrac12)$,
\begin{equation}
\label{eq:bet-wa-ub}
w \le \frac{\delta}{c(\gamma)}.
\end{equation}
\end{lemma}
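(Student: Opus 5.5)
The plan is a direct computation from the definitions in \eqref{eq:success} and \eqref{eq:regret-def}, split into two symmetric cases according to which action is optimal.

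\textbf{Step 1: compute $V^\star$.} Since $V=q\,u_L+(1-q)\,u_R$ is affine in $q\in[0,1]$, its maximum is attained at an endpoint. Hence $V^\star=\max\{u_L,u_R\}$, which is positive by the standing assumption.

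\textbf{Step 2: compute the regret gap in each case.}
\begin{itemize}
\item If $u_L\ge u_R$, then $V^\star=u_L$ and $V^\star-V=(1-q)(u_L-u_R)=w\,|u_L-u_R|$.
\item If $u_R>u_L$, then $V^\star=u_R$ and $V^\star-V=q(u_R-u_L)=w\,|u_L-u_R|$.
\end{itemize}
In both cases $\delta=(V^\star-V)/V^\star$, which gives \eqref{eq:regret}. The tie $u_L=u_R$ makes both sides zero, so assigning the tie to ``$L$ optimal'' is harmless.

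\textbf{Step 3: specialize to the betting case.} Set $u_R=1-u_L$ and $m=|u_L-\tfrac12|$. Then $|u_L-u_R|=|2u_L-1|=2m$ and $\max\{u_L,1-u_L\}=\tfrac12+m$. Substituting gives
\[
\delta=w\cdot\frac{2m}{\tfrac12+m}=w\cdot\frac{4m}{1+2m},
\]
which is \eqref{eq:bet-regret}.

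\textbf{Step 4: deduce the bound on $w$.} The map $f(m)=4m/(1+2m)$ is increasing on $[0,\tfrac12]$, since $f'(m)=4/(1+2m)^2>0$. So on the event $m\ge\gamma$ we have $f(m)\ge f(\gamma)=c(\gamma)>0$. Therefore
\[
w=\frac{\delta}{f(m)}\le\frac{\delta}{c(\gamma)},
\]
which is \eqref{eq:bet-wa-ub}.

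\textbf{Main obstacle.} There is no real obstacle. The only points needing care are the tie convention when $u_L=u_R$ and the observation that $V^\star>0$ keeps the normalization well-defined. In the betting case $V^\star\ge\tfrac12$ automatically.
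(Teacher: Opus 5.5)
Your proposal is correct and follows essentially the same route as the paper's proof: both use the affine-in-$q$ argument to get $V^\star=\max\{u_L,u_R\}$, compute the gap in each case, substitute $|u_L-u_R|=2m$ and $V^\star=\tfrac12+m$ in the betting case, and conclude from the monotonicity of $4m/(1+2m)$. Your explicit remark that ties $u_L=u_R$ make both sides vanish is a small addition that the paper leaves implicit.
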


\paragraph{Fully observed diagnostic setup.} For the composite goals $G^{(n)}_{s,a,s',k}$ of Definition~\ref{def:goal}, write \[ q_{s,a,s',k}:= \pi\!\left(G^{(n,1)}_{s,a,s',k}\mid s_0,G^{(n)}_{s,a,s',k}\right), \] and define $V^\pi,V^\star$ and the normalized regret $\delta_{s,a,s',k}(\pi;s_0)$ as in Eq.~\eqref{eq:regret-def}. The induced clipped soft estimator is \begin{equation} \label{eq:p_hat_def} \widehat P_{ss'}(a):= \operatorname{clip}_{[0,1]}\!\left[ \frac1n\left(\sum_{k=0}^{n}(1-q_{s,a,s',k})-\frac12\right) \right]. \end{equation} The clipping only enforces that the estimator is a probability and cannot increase absolute error to the true $P_{ss'}(a)$.
Note this estimator is explicitly \emph{computable} by querying the goal-conditioned policy on each diagnostic goal, recording its probability $q_{s,a,s',k}$ of choosing the first branch, and summing these probabilities as in Eq.~\eqref{eq:p_hat_def}; it does not require estimating transition frequencies from a rollout (though this can be done too).

\begin{theorem}[Fully observed: stochastic policies + average regret $\Rightarrow$ approximate transition model] \label{thm:fo_avg_stoch} 
Under the fully observed diagnostic setup, assume \begin{equation} \label{eq:avg_regret_assumption} \mathbb{E}_{(s,a,s',k)\sim \mathrm{Unif}(\mathcal S\times\mathcal A\times\mathcal S \times\{0,\dots,n\})} \big[\delta_{s,a,s',k}(\pi;s_0)\big] \le \bar\delta . \end{equation} Then, for any fixed $\gamma\in(0,\tfrac12)$, \begin{equation} \label{eq:avg_error_bound} \begin{split} &\mathbb{E}_{(s,a,s')} \Big[\,\big|\widehat P_{ss'}(a)-P_{ss'}(a)\big|\,\Big]\\ \le\;& 2t_\gamma\, \mathbb{E}_{(s,a,s')} \left[ \sqrt{\frac{P_{ss'}(a)(1-P_{ss'}(a))}{n}} \right] + \frac{\bar\delta}{c(\gamma)} + O\!\left(\frac1n\right). \end{split} \end{equation} In particular, \[ \mathbb{E}_{(s,a,s')} \big[|\widehat P_{ss'}(a)-P_{ss'}(a)|\big] \le \frac{t_\gamma}{\sqrt n} + \frac{\bar\delta}{c(\gamma)} + O\!\left(\frac1n\right). \]
\end{theorem}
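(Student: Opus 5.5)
The plan is to reduce each composite goal $G^{(n)}_{s,a,s',k}$ to a binary bet and then apply Lemma~\ref{lem:bet}. Fix $(s,a,s')$ and write $p:=P_{ss'}(a)$. Because the environment is communicating and stationary, an optimal continuation after the marker action reaches $(s,a)$ infinitely often almost surely, so all $n$ attempts occur. By the strong Markov property at the attempt times, the outcomes $X_1,\dots,X_n$ are i.i.d.\ Bernoulli$(p)$, whatever the agent does between attempts. Hence $N_n\sim\mathrm{Bin}(n,p)$, and the two branches have complementary success probabilities
\[
u_L=\Pr(N_n\le k)=:F(k),\qquad u_R=1-F(k).
\]
So each goal is exactly the betting case of Lemma~\ref{lem:bet}. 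Its margin is $m_k=|F(k)-\tfrac12|$, and its regret is $\delta_k=w_k\cdot 4m_k/(1+2m_k)$, where $w_k$ is the mass $\pi$ puts on the wrong branch.

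Next, compare the estimator with the ideal one. The ideal deterministic bettor has $1-q^\star_k=\mathbf 1\{F(k)<\tfrac12\}$, i.e.\ it picks $R$ exactly when $k$ is below the median. Then
\[
\sum_k(1-q^\star_k)=\#\{k: F(k)<\tfrac12\}=\mathrm{med}(N_n)+O(1),
\]
and the binomial median lies within $1$ of $np$. So the ideal estimator is within $O(1/n)$ of $p$. For the actual policy, $|q_k-q^\star_k|=w_k$, which gives
\[
|\widehat P-p|\le \frac1n\sum_k w_k+O\!\left(\frac1n\right),
\]
using that clipping does not increase the error. Split the values of $k$ into two sets.
\begin{itemize}
\item \emph{High-margin $k$} ($m_k\ge\gamma$). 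By \eqref{eq:bet-wa-ub}, $w_k\le\delta_k/c(\gamma)$.
\item \emph{Low-margin $k$} ($m_k<\gamma$). Bound $w_k\le1$ and count these $k$: they satisfy $F(k)\in(\tfrac12-\gamma,\tfrac12+\gamma)$, i.e.\ they lie between two central quantiles of $\mathrm{Bin}(n,p)$.
\end{itemize}

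The main obstacle is counting the low-margin $k$ with the stated constant $t_\gamma$. I would use one-sided Chebyshev (Cantelli): $\Pr(N\ge np+t\sigma)\le 1/(1+t^2)$, with $\sigma^2=np(1-p)$. Choosing $t$ so that $1/(1+t^2)=\tfrac12-\gamma$ gives
\[
t^2=\frac{1+2\gamma}{1-2\gamma}=t_\gamma^2.
\]
Hence $F(k)\ge\tfrac12+\gamma$ once $k\ge np+t_\gamma\sigma$, and symmetrically $F(k)\le\tfrac12-\gamma$ once $k<np-t_\gamma\sigma$. The low-margin set therefore has size at most $2t_\gamma\sigma+O(1)$. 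Dividing by $n$ gives the term $2t_\gamma\sqrt{p(1-p)/n}+O(1/n)$.

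Finally, average over $(s,a,s')$ uniformly. The high-margin contribution is
\[
\frac1n\sum_k\frac{\delta_k}{c(\gamma)}=\frac{n+1}{n}\cdot\frac{1}{c(\gamma)}\,\mathbb E_k[\delta_k],
\]
so averaging over $(s,a,s')$ yields at most $\bar\delta/c(\gamma)+O(1/n)$, using the uniform-average regret assumption \eqref{eq:avg_regret_assumption}. This gives \eqref{eq:avg_error_bound}. The corollary follows from $p(1-p)\le\tfrac14$.
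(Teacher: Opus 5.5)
Your proposal is correct and follows the paper's proof essentially step for step: the betting reduction via Lemma~\ref{lem:bet}, reading $\sum_k(1-q_k)$ as an estimate of the lower binomial median with error at most $\sum_k w_k$, splitting thresholds by margin, counting the low-margin ones with Cantelli at $t_\gamma$, and using $|\mathrm{med}(N_n)-np|\le 1$ before averaging. One imprecision: the regret identity should be the inequality $\delta_k\ge w_k\cdot\frac{4m_k}{1+2m_k}$, because $\pi$ may execute the post-commitment phase suboptimally, so its branch success probabilities are only bounded above by $F(k)$ and $1-F(k)$; this is the direction you actually use, and it is the one the paper derives explicitly.
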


\begin{remark}[Independence from goal family size]
\citet{richens2025} state a more restricted version of Theorem~\ref{thm:fo_avg_stoch} under a (worst-case) competence assumption over all goals, but note that their proof only needs an explicit diagnostic subset of $O(n|\mathcal A||\mathcal S|^2)$ simple composite goals.
By contrast, our Theorem~\ref{thm:fo_avg_stoch} does not depend on the goal family size because it relaxes the worst-case regret assumption by the average normalized regret assumption \eqref{eq:avg_regret_assumption} on that diagnostic family.
\end{remark}

Notably, the error bound \eqref{eq:avg_error_bound} of Theorem~\ref{thm:fo_avg_stoch} tightens  as the goal depth $n$ increases, reflecting the fact that longer-horizon goal competence forces the agent to estimate transition dynamics with increasing precision. 
In contrast, when $n=1$ (purely myopic goals), accurate world modeling is not required---explicating the classic pitfall behind the Good Regulator Theorem~\citep{conant1970every} that trivial or constant policies can suffice for immediate control, but fail once multi-step coordination is demanded.

A natural question to ask next is under what conditions can we recover a \emph{causal} world model, and of what \emph{type} is the represented causality?

\begin{corollary}[Causal content: approximately recovered interventional kernel]
\label{cor:causal_content_expanded}
Assume the setting and hypotheses of Theorem~\ref{thm:fo_avg_stoch}.
Assume additionally that the controlled Markov process admits an \emph{$\varepsilon_{\mathrm{cMP}}$-approximate} causal Markov-process (cMP) interpretation in which choosing $A_t=a$ corresponds to the intervention $\mathrm{do}(A_t=a)$ and,
for all $s,a,s'$,
\begin{equation}
\label{eq:cMP_identification}
\begin{split}
&\big|P_{ss'}(a) - P^{\mathrm{do}}_{ss'}(a)\big|
\;\le\; \varepsilon_{\mathrm{cMP}},\\
&P^{\mathrm{do}}_{ss'}(a):=P(S_{t+1}=s' \mid S_t=s,\mathrm{do}(A_t=a)).
\end{split}
\end{equation}
Then the estimator $\widehat P$ defined from $\pi$ via \eqref{eq:p_hat_def} satisfies the same average error bound as in Theorem~\ref{thm:fo_avg_stoch}, up to the mismatch $\varepsilon_{\mathrm{cMP}}$: for any fixed $\gamma\in(0,\tfrac12)$,
\begin{equation}
\begin{split}
&\mathbb{E}_{(s,a,s')}\big[\,|\widehat P_{ss'}(a)-P^{\mathrm{do}}_{ss'}(a)|\,\big]\\
\;\le&\; 2t_\gamma\,
\mathbb{E}_{(s,a,s')}
\Big[\sqrt{\tfrac{P_{ss'}(a)(1-P_{ss'}(a))}{n}}\Big]
\;+\;\frac{\bar\delta}{c(\gamma)}\\
&+\;\varepsilon_{\mathrm{cMP}}
\;+\;O\left(\frac{1}{n}\right).
\end{split}
\end{equation}
In particular, low average regret on the diagnostic goal family forces $\pi$ to implicitly approximate Level 2 interventional queries, in the sense of~\citet{pearl2009causality}, of the form $P(S_{t+1}=s' \mid S_t=s,\mathrm{do}(A_t=a))$ up to $\varepsilon_{\mathrm{cMP}}$.
\end{corollary}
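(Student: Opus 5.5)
The plan is to reduce Corollary~\ref{cor:causal_content_expanded} to Theorem~\ref{thm:fo_avg_stoch} via a pointwise triangle inequality, and then average. For each triple $(s,a,s')$ we write
\[
\big|\widehat P_{ss'}(a)-P^{\mathrm{do}}_{ss'}(a)\big|
\le \big|\widehat P_{ss'}(a)-P_{ss'}(a)\big| + \big|P_{ss'}(a)-P^{\mathrm{do}}_{ss'}(a)\big|.
\]
By the cMP hypothesis \eqref{eq:cMP_identification}, the second term is at most $\varepsilon_{\mathrm{cMP}}$ for every $(s,a,s')$. This holds uniformly, so it survives taking $\mathbb{E}_{(s,a,s')}$ under the uniform distribution and contributes exactly an additive $\varepsilon_{\mathrm{cMP}}$.

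For the first term, we take expectations over $(s,a,s')\sim\mathrm{Unif}(\mathcal S\times\mathcal A\times\mathcal S)$, the same averaging used in \eqref{eq:avg_error_bound}. We then invoke Theorem~\ref{thm:fo_avg_stoch} directly. The estimator $\widehat P$ is the same object defined in \eqref{eq:p_hat_def}: it depends only on the policy's branch probabilities $q_{s,a,s',k}$ and not on any causal interpretation. The average-regret hypothesis \eqref{eq:avg_regret_assumption} is also unchanged. So for any fixed $\gamma\in(0,\tfrac12)$ we obtain the bound $2t_\gamma\,\mathbb{E}[\sqrt{P(1-P)/n}]+\bar\delta/c(\gamma)+O(1/n)$. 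Note that this bound is stated in terms of the observational kernel $P$, and the corollary's bound keeps $P$ there as well, not $P^{\mathrm{do}}$. Linearity of expectation then combines the two terms into the claimed inequality.

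The only point that needs care is the interpretive one rather than the analytic one. We must check that the diagnostic goals are built from the agent's own action choices $A_t=a$ at $S_t=s$, so that the success frequencies $N_n$ are governed by $P_{ss'}(a)$. Under the assumed cMP reading, an agent choosing $A_t=a$ is exactly $\mathrm{do}(A_t=a)$, so these frequencies are interventional up to $\varepsilon_{\mathrm{cMP}}$. This is what justifies the final sentence of the corollary: the policy implicitly encodes Level~2 queries $P(S_{t+1}=s'\mid S_t=s,\mathrm{do}(A_t=a))$. I expect no real obstacle, since the mismatch is uniform and additive, and no interaction with the clipping or the $O(1/n)$ term arises.
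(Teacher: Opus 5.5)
Your proposal is correct and follows the paper's proof: a pointwise triangle inequality through $P_{ss'}(a)$, the uniform $\varepsilon_{\mathrm{cMP}}$ bound on the mismatch term, and then averaging over $(s,a,s')\sim\mathrm{Unif}(\mathcal S\times\mathcal A\times\mathcal S)$ and bounding the first term by Theorem~\ref{thm:fo_avg_stoch}. Your remark that the bound keeps the observational $P$ inside the square root, not $P^{\mathrm{do}}$, is also correct and matches the stated inequality.
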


Note that Corollary~\ref{cor:causal_content_expanded} does not, in general, identify causal relations between \emph{concurrent} components of the state vector (e.g. between $X_t$ and $Y_t$ when $S_t=(X_t,Y_t)$), since such relations can be non-identifiable from the transition function alone.
It is worth noting that unless the transition function $P$ is a point-mass, namely $S_{t+1} = f(S_t, A_t)$, whereby learning the interventional kernel is exactly equivalent to learning the transition function $f$, then \citet{pearl2009causality} Level 2 of interventions, rather than counterfactuals, is the maximum level of recovery we can guarantee.
This is the same level that \citet[Theorems 1-3]{richens2024robust} reach, but they do it under a much stronger maximum (rather than average) regret assumption under deterministic (rather than stochastic) policies.

In fact, despite generalizing to stochastic policies under average regret, \citet{pearl2009causality} Level 3 (counterfactuals) remains out of reach without additional assumptions:

\begin{corollary}[No generic Level 3 recovery from the interventional kernel]
\label{cor:no_L3}
Even if $\widehat P$ recovers the interventional kernel $P^{\mathrm{do}}_{ss'}(a)$ exactly (in particular, even if $\pi$ is optimal on all the diagnostic goals), the resulting information does not, in general, identify Level $3$ counterfactual queries involving $S_{t+1}^a$ and $S_{t+1}^{a'}$ simultaneously, where $S_{t+1}^a$ denotes the potential next state under $\mathrm{do}(A_t=a)$.
\end{corollary}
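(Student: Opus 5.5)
The plan is a non-identifiability argument: two structural causal models that agree on every interventional kernel but disagree on a joint counterfactual query. Since $\widehat P$, and indeed everything the diagnostic goals of Definition~\ref{def:goal} can probe, depends on the environment only through the Level~2 kernel $P^{\mathrm{do}}_{ss'}(a)$, such a pair settles the corollary. Every goal $G^{(n)}_{s,a,s',k}$ is an event on a single realized trajectory. Its success probability under any policy is therefore a functional of $\mu_0$ and the one-step kernels $P(\cdot\mid s,a)$. Two environments with the same kernels thus induce identical values $u_L,u_R$ for every goal, identical optimal policies, identical regrets, and identical $q_{s,a,s',k}$ for any fixed $\pi$. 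Consequently even exact recovery (as when $\pi$ is optimal and $n\to\infty$, via Theorem~\ref{thm:fo_avg_stoch} and Corollary~\ref{cor:causal_content_expanded}) cannot separate them.

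The concrete construction takes $\mathcal S=\{0,1\}$ and two actions $a,a'$. Both models use the kernel $P^{\mathrm{do}}_{s1}(a)=P^{\mathrm{do}}_{s1}(a')=\tfrac12$ for every $s$; one could also take any non-degenerate kernel, and we may need variants that also satisfy the assumption that actions influence transitions. The two SCMs are $S_{t+1}=f(S_t,A_t,U_{t+1})$ with $U_{t+1}\sim\mathrm{Bernoulli}(\tfrac12)$ i.i.d.
\begin{itemize}
\item Model $M_1$: $f(s,a,u)=f(s,a',u)=u$.
\item Model $M_2$: $f(s,a,u)=u$ and $f(s,a',u)=1-u$.
\end{itemize}
Both models yield the same interventional kernel. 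However, the joint counterfactual $\Pr(S^a_{t+1}=1, S^{a'}_{t+1}=1\mid S_t=s)$ equals $\tfrac12$ in $M_1$ and $0$ in $M_2$.

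To respect the paper's standing assumption that actions influence transitions, I will instead use kernels $P_{s1}(a)=p\neq P_{s1}(a')=p'$. I then couple the two potential outcomes through a shared uniform $U$ in two ways. The comonotone coupling uses thresholds $\mathbf 1\{U<p\}$ and $\mathbf 1\{U<p'\}$. The countermonotone coupling uses $\mathbf 1\{U<p\}$ and $\mathbf 1\{U>1-p'\}$. These give joint probabilities $\min(p,p')$ and $\max(0,p+p'-1)$ respectively, which differ whenever $p,p'\in(0,1)$. The models are also communicating, since every state is reachable in one step.

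The only delicate step is the first one: arguing rigorously that the entire diagnostic apparatus is a function of the observational/interventional law of trajectories, which in turn is determined by the kernel. Two points need care. First, for each policy the trajectory distribution is built inductively from $\mu_0$, $\pi$, and $P(\cdot\mid s,a)$. Second, the joint distribution over potential outcomes at the same time step never appears, because only one action is taken per step. With that established, the counterexample finishes the proof, and I would note that the Fréchet bounds give the general range of unidentified values.
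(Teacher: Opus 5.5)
Your proposal is correct and takes the paper's approach: the paper's proof is exactly your first construction ($S_{t+1}=U$ versus $S_{t+1}=A_t\oplus U$ with $U\sim\mathrm{Bernoulli}(\tfrac12)$), two SCMs that share the interventional kernel but couple $S^a_{t+1}$ and $S^{a'}_{t+1}$ differently. Your explicit argument that the diagnostic goals depend on the environment only through the kernel, and your comonotone/countermonotone variant with $p\neq p'$, are sound refinements of the same route; unlike the paper's $\tfrac12$-kernel example, the variant also satisfies the standing assumption that actions influence transitions.
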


Therefore, recovering~\citet{pearl2009causality} Level~3 counterfactuals requires an explicit structural causal model specifying the exogenous noise and its cross-action coupling, not merely the interventional transition kernel $P^{\mathrm{do}}(s'\mid s,a)$.

\section{Selection Theorems under Partial Observability}
\label{sec:wm-po}
Our betting reduction (Lemma~\ref{lem:bet}) also enables selection theorems under \emph{partial} observability, addressing an open question of~\citet{richens2025}. 
The reason this is open, is because under partial observability, we cannot guarantee that the agent's action choices isolate a single underlying transition probability in the way they do in the fully observed case.
When the agent observes only an observation $o_t$ rather than the true state $s_t$, the success probabilities of the diagnostic branches become mixtures over latent states consistent with $o_t$, and different latent dynamics can induce identical observable behavior on all composite goals of bounded depth.
Consequently, low regret does not imply recovery of the underlying transition kernel without additional structure. 
This breaks the direct reduction used in Theorem~\ref{thm:fo_avg_stoch} and requires more careful selection of diagnostic goals defined at the level of predictive beliefs rather than physical states. 
We achieve this by combining our betting reduction from \S\ref{sec:wm-fo} with predictive-state representations (PSRs).

\subsection{Setup and Notation}

\textbf{POMDP.}
A finite partially observed Markov decision process (POMDP) is a tuple
\begin{equation*}
E = (\mathcal X, \mathcal A, \mathcal O, T, Z, \mu_0),
\end{equation*}
where $\mathcal X$ is a finite latent state space, $\mathcal A$ is a finite action space with $|\mathcal A|\ge 2$, $\mathcal O$ is a finite observation space, $T(x'\mid x,a)$ is the transition kernel, $Z(o\mid x)$ is the observation kernel, and $\mu_0\in\Delta(\mathcal X)$ is the initial latent-state distribution.
A history at time $t$ is
\begin{equation*}
h_t:=(o_0,a_0,o_1,\dots,a_{t-1},o_t).
\end{equation*}
For any history $h_t$ and any prescribed future action sequence $A_{t:t+k-1}=\alpha\in\mathcal A^k$, the POMDP induces a well-defined conditional distribution over future observations $O_{t+1:t+k}$.
For convenience, we will drop the subscript $t$ and refer to histories as $h := h_t$.

\textbf{Agent interface (report bit).}
As in the fully observed case, we reduce prediction to a one-shot binary decision.
We allow the agent to emit a report bit $B_t\in\{L,R\}$ that does not affect environment dynamics.
Formally, the agent outputs $(B_t,A_t)\in\{L,R\}\times\mathcal A$, while the environment transition
ignores $B_t$.
This device is without loss of generality for necessity results: any agent can internally commit to one of two incompatible plans before acting, without changing the induced environment process.
All prediction is expressed through the report bit; the environment-action channel is used only to execute prescribed action sequences.

\textbf{Tests (predictive-state style).}
A \emph{test} is a pair
\begin{equation*}
T=(\alpha,W),
\end{equation*}
where $\alpha\in\mathcal A^k$ is a finite action sequence and $W\subseteq\mathcal O^k$ is an event over the resulting observation sequence.
For a history $h$, define the test success probability
\begin{equation*}
p_T(h)
\;:=\;
\Pr\!\big(O_{t+1:t+k}\in W \,\big|\, h,\ A_{t:t+k-1}=\alpha\big),
\end{equation*}
and the associated margin
\begin{equation*}
m_T(h):=\big|p_T(h)-\tfrac12\big|.
\end{equation*}

\textbf{Behavioral distinguishability.}
Two histories $h,h'$ are \emph{behaviorally distinguishable} if there exists a test $T$ with $p_T(h)\neq p_T(h')$.
They are \emph{$\gamma$-distinguishable} if $|p_T(h)-p_T(h')|\ge\gamma$ for some test $T$.
A POMDP is \emph{non-trivially partially observable} if there exist histories with the same last observation that are behaviorally distinguishable.

\textbf{Betting goals induced by tests.}
Each test $T=(\alpha,W)$ induces a one-shot betting goal $g_T$: 
at history $h$, the agent outputs a report bit $B_t$; the environment then executes $A_{t:t+k-1}=\alpha$; the episode succeeds iff $B_t=L$ and $O_{t+1:t+k}\in W$, or $B_t=R$ and $O_{t+1:t+k}\notin W$.
Thus, $g_T$ is a binary bet on whether $W$ occurs under $\alpha$.

\textbf{Policies, value, and regret.}
A (possibly stochastic) goal-conditioned policy specifies $\pi(b\mid h,g_T)$ for $b\in\{L,R\}$.
Let $q_T(h):=\pi(L\mid h,g_T)$.
The success probability under $\pi$ is
\begin{equation}
\label{eq:Vpi}
V^\pi(h;g_T)=q_T(h)\,p_T(h)+(1-q_T(h))(1-p_T(h)),
\end{equation}
while the optimal success probability is
\begin{equation}
\label{eq:Vstar}
V^\star(h;g_T)=\max\{p_T(h),1-p_T(h)\}=\tfrac12+m_T(h).
\end{equation}
Define the normalized regret
\begin{equation*}
\delta_T(\pi;h):=1-\frac{V^\pi(h;g_T)}{V^\star(h;g_T)}\in[0,1].
\end{equation*}

\textbf{Evaluation distribution.}
Let $\mathcal H$ be a distribution over histories and let $D$ be a distribution over tests.
We assume a global average regret bound
\begin{equation}
\label{eq:avg-regret-global}
\mathbb E_{h\sim\mathcal H}\,\mathbb E_{T\sim D}\big[\delta_T(\pi;h)\big]\le\bar\delta.
\end{equation}

\textbf{Wrong-action mass and margins.}
For a test $T$ and history $h$, define the probability mass assigned to the suboptimal bet
\begin{equation*}
w_T(h):=
\begin{cases}
1-q_T(h), & p_T(h)\ge\tfrac12,\\
q_T(h), & p_T(h) < \tfrac12.
\end{cases}
\end{equation*}
For $\gamma\in(0,\tfrac12)$, let
\begin{equation*}
\begin{split}
&E_\gamma:=\{(h,T): m_T(h)\ge\gamma\}\\
&q_\gamma:=\Pr_{h\sim\mathcal H,\,T\sim D}\big((h,T)\in E_\gamma\big).
\end{split}
\end{equation*}

\textbf{Non-degenerate evaluation.}
Our selection results are informative only if the evaluation distribution places nontrivial mass on informative tests.
We assume that for some $\gamma\in(0,\tfrac12)$ there exists a constant $\eta'>0$ such that
\begin{equation*}
%\begin{split}
\Pr\!\big(p_T(h)\ge\tfrac12+\gamma\big)\ge\eta',\qquad \Pr\!\big(p_T(h)\le\tfrac12-\gamma\big)\ge\eta',
%\end{split}
\end{equation*}
thereby implying that $q_\gamma\ge2\eta'$.
These conditions rule out degenerate evaluations where all bets are near coin flips or where one outcome is almost always correct (to avoid the case where a constant policy that always reports $L$ can have very low regret without representing any nontrivial predictive
distinctions, which is a pitfall of the original Good Regulator Theorem~\citep{conant1970every}).

\textbf{Predictive world model.}
In a POMDP, what matters for decision-making is the ability to predict future observations under candidate action sequences.
Accordingly, we use \emph{predictive world model} to mean any internal mechanism sufficient to determine (or approximate) the test probabilities $\{p_T(h)\}$.
In the language of predictive-state representations (PSRs), the vector
\begin{equation*}
\eta_{\mathcal T}(h) := \big(p_T(h)\big)_{T\in\mathcal T}
\end{equation*}
is the \emph{predictive state}.
For sufficiently rich $\mathcal T$, $\eta_{\mathcal T}(h)$ is decision-sufficient; in finite POMDPs, the belief state is \emph{one} such representation~\citep{kaelbling1998}.

\textbf{Memory (representation of history).}
We model the agent’s internal memory abstractly as a representation $M=f(h)$ through which the policy factors:
\begin{equation}\label{eq:memory}
\pi(\cdot\mid h,g_T)=\pi(\cdot\mid M(h),g_T).
\end{equation}
We say that $M$ is \emph{decision-sufficient} for a test family if $M(h)$ determines the optimal bet for all tests in that family, and accordingly that $\pi$ is \textbf{\emph{$\mathbf{M}$-based}}, since $\pi$ depends on $h$ only through $M(h)$ for all betting goals $g_T$.
Our selection theorems show that achieving low average regret on separating betting goals forces the agent's memory to refine the predictive-state partition induced by $\eta_{\mathcal T}$; representations that alias histories with distinct predictive states incur unavoidable regret.

\subsection{Predictive world modeling necessity and recovery under partial observability}
\begin{theorem}[Predictive modeling necessity]\label{thm:predictive}
Fix $\gamma\in(0,\tfrac12)$.
Assume the global average regret bound \eqref{eq:avg-regret-global}.
Then
\begin{equation}
\mathbb E_{h\sim\mathcal H}\ \mathbb E_{T\sim D}\Big[w_T(h)\,\mathbf 1\{m_T(h)\ge\gamma\}\Big]
\;\le\;\frac{\bar\delta}{c(\gamma)}.
\label{eq:thm2-main-global}
\end{equation}
Equivalently, if $q_\gamma>0$ then
\begin{equation}\label{eq:thm2-cond-global}
\mathbb E\Big[w_T(h)\,\big|\, m_T(h)\ge\gamma\Big]\;\le\;\frac{\bar\delta}{q_\gamma\,c(\gamma)}.
\end{equation}
\end{theorem}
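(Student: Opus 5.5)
The plan is to apply Lemma~\ref{lem:bet} pointwise to each pair $(h,T)$ and then average. For fixed $h$ and $T$, the betting goal $g_T$ is exactly the binary betting situation of Lemma~\ref{lem:bet}. We identify $u_L=p_T(h)$ and $u_R=1-p_T(h)$, with $q=q_T(h)$.

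\textbf{Pointwise identity.} Eqs.~\eqref{eq:Vpi}--\eqref{eq:Vstar} coincide with \eqref{eq:success} and the definition of $V^\star$ in \S\ref{sec:notation}. Hence the wrong-action mass $w$ of Lemma~\ref{lem:bet} is $w_T(h)$, and the margin is $m_T(h)$. One small point needs a check: the tie case $p_T(h)=\tfrac12$ is assigned to ``$L$ optimal'' in both definitions, so they agree. There $m=0$ and both sides vanish anyway. Eq.~\eqref{eq:bet-regret} then gives the exact identity
\[
\delta_T(\pi;h)=w_T(h)\,\frac{4m_T(h)}{1+2m_T(h)} .
\]
Note also that $V^\star=\tfrac12+m_T(h)\ge\tfrac12>0$, so the normalized regret is always well defined.

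\textbf{Monotonicity and indicator.} The map $m\mapsto 4m/(1+2m)$ is increasing on $[0,\tfrac12]$, with derivative $4/(1+2m)^2>0$. So on $\{m_T(h)\ge\gamma\}$ we have $\delta_T(\pi;h)\ge c(\gamma)\,w_T(h)$. Off that event, $\delta_T(\pi;h)\ge 0$. Together these give the pointwise inequality
\[
c(\gamma)\,w_T(h)\,\mathbf 1\{m_T(h)\ge\gamma\}\le \delta_T(\pi;h)
\]
for every $(h,T)$. Taking $\mathbb E_{h\sim\mathcal H}\mathbb E_{T\sim D}$, which is legitimate since all quantities are bounded in $[0,1]$, and applying \eqref{eq:avg-regret-global} yields \eqref{eq:thm2-main-global} after dividing by $c(\gamma)>0$.

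\textbf{Conditional form.} When $q_\gamma>0$, write the left side of \eqref{eq:thm2-main-global} as $q_\gamma\,\mathbb E[w_T(h)\mid m_T(h)\ge\gamma]$ and divide by $q_\gamma$ to obtain \eqref{eq:thm2-cond-global}. Conversely, multiplying \eqref{eq:thm2-cond-global} by $q_\gamma$ recovers \eqref{eq:thm2-main-global}, so the two statements are equivalent.

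There is no substantive obstacle here. The only care needed is bookkeeping: verifying that the report-bit bet fits the hypotheses of Lemma~\ref{lem:bet} (the environment ignores $B_t$ and the subsequent actions are fixed to $\alpha$, so success depends only on $B_t$ and the outcome of $W$), and handling the tie case in the definition of $w_T$.
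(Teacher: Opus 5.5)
Your proposal is correct and matches the paper's proof essentially step for step. The paper derives the same pointwise identity $\delta_T(\pi;h)=w_T(h)\,\frac{4m_T(h)}{1+2m_T(h)}$, by direct computation rather than by citing Lemma~\ref{lem:bet}, though the calculation is the same. It then lower-bounds this by $c(\gamma)\,w_T(h)$ on $\{m_T(h)\ge\gamma\}$, averages against \eqref{eq:avg-regret-global}, and divides by $q_\gamma$ to get the conditional form.
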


In other words, if a policy has small \emph{global average} regret on betting goals, then on tests that are not near a coin-flip ($m_T(h)\ge\gamma$), it must place only small probability mass on the suboptimal bet.
Thus, robust goal performance \emph{selects for} an internal predictive mechanism sufficient to decide many action-conditioned future-observation tests---a minimal, decision-relevant notion of a predictive world model.

However, we may ask what further assumptions we need to recover the predictive state, in an analogous manner to the fully observed case of Theorem~\ref{thm:fo_avg_stoch}, assuming average regret and stochastic policies.

First, we show that recovery is not possible in our current setup with single bets (even under optimal policies), showing that under our assumptions, Theorem~\ref{thm:predictive} is maximally strong:
\begin{proposition}[No generic predictive-state recovery from fair bets]
\label{prop:po-no-recovery}
Even exact optimal query access to the fair betting goals $g_T$ does not, in general, identify the predictive state $\eta_{\mathcal T}(h)$.
Indeed, there exist finite POMDPs $E_p,E_q$ with $|\mathcal X|=4$, a history $h$ with the same last observation in both environments, and parameters $p\neq q$ in $(\tfrac12,1)$ such that for every test $T$ the unique optimal bet for $g_T$ at $h$ is the same in $E_p$ and $E_q$, while $p_T^{E_p}(h)\neq p_T^{E_q}(h)$ for some test $T$.
Consequently, from the family of fair betting decisions alone one cannot, in general, recover the predictive state, and hence not a PSR.
\end{proposition}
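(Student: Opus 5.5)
The plan is to exhibit an explicit pair of POMDPs in which all randomness relevant to the future is a single hidden coin flip. With only one random bit, every test has success probability in $\{0,\,1-p,\,p,\,1\}$. Since $p>\tfrac12$, the sign of $p_T(h)-\tfrac12$ then depends only on \emph{which} outcome the event $W$ contains, not on the bias $p$. The optimal bet therefore carries only the qualitative ``which branch is likely'' information, and never the quantitative value of $p$.

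\textbf{Construction.} Take $\mathcal X=\{x_a^0,x_b^0,x_a,x_b\}$, any finite $\mathcal A$ with $|\mathcal A|\ge 2$, and $\mathcal O=\{o_0,o_A,o_B\}$. The kernels are deterministic:
\begin{itemize}
\item $Z(o_0\mid x_a^0)=Z(o_0\mid x_b^0)=1$, $Z(o_A\mid x_a)=1$, $Z(o_B\mid x_b)=1$;
\item $T(x_a\mid x_a^0,\cdot)=T(x_a\mid x_a,\cdot)=1$ and $T(x_b\mid x_b^0,\cdot)=T(x_b\mid x_b,\cdot)=1$, for every action.
\end{itemize}
In $E_p$ let $\mu_0(x_a^0)=p$ and $\mu_0(x_b^0)=1-p$; define $E_q$ identically with $q$ in place of $p$, where $p\neq q$ and both lie in $(\tfrac12,1)$. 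Take $h=(o_0)$ at $t=0$, which is a valid history in both environments with the same last observation. Conditioning on $h$ leaves the prior unchanged because $o_0$ is emitted by both initial states.

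\textbf{Key step.} Fix any test $T=(\alpha,W)$ with $\alpha\in\mathcal A^k$. Given $h$, the future observation sequence $O_{1:k}$ equals $\omega_A:=(o_A,\dots,o_A)$ with probability $p$, and $\omega_B:=(o_B,\dots,o_B)$ with probability $1-p$, whatever $\alpha$ is. (For $k=0$ the unique outcome is the empty sequence, so $p_T\in\{0,1\}$.) Hence
\[
p_T^{E_p}(h)=p\,\mathbf 1\{\omega_A\in W\}+(1-p)\,\mathbf 1\{\omega_B\in W\}\in\{0,\,1-p,\,p,\,1\},
\]
and likewise for $E_q$ with $q$.

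Because $p,q\in(\tfrac12,1)$, none of these values equals $\tfrac12$, so each environment has a unique optimal bet for every test. Moreover, $p_T(h)>\tfrac12$ holds in either environment exactly when $\omega_A\in W$. The unique optimal bet is therefore $L$ if $\omega_A\in W$ and $R$ otherwise, identically in $E_p$ and $E_q$.

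Finally, the test $T=(\alpha,\{\omega_A\})$ with $k=1$ gives $p_T^{E_p}(h)=p\neq q=p_T^{E_q}(h)$. So an oracle returning optimal fair-bet decisions cannot distinguish the two predictive states $\eta_{\mathcal T}(h)$, and neither the predictive state nor a PSR is identified.

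\textbf{Main obstacle.} The delicate point is guaranteeing \emph{uniqueness} of the optimal bet for \emph{every} test, i.e.\ that no $W$ ever has probability exactly $\tfrac12$. With a richer stochastic structure, arbitrary unions of outcomes could sum to $\tfrac12$. Collapsing all future randomness to one biased bit with $p,q\neq\tfrac12$ rules this out. It also makes the action-independence of the observation process trivial, so no dependence on $\alpha$ needs to be tracked.
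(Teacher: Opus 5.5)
Your proposal is correct and follows the paper's proof essentially verbatim: the same four-state POMDP encoding one hidden biased bit, the same computation $p_T(h)\in\{0,\,1-r,\,r,\,1\}$ with the optimal bet determined solely by whether $W$ contains the ``likely'' branch, and the same witness test $(\alpha,\{\omega_A\})$ separating $p$ from $q$. Two points the paper leaves implicit, and which you make explicit, are that no attainable value equals $\tfrac12$ (so the optimal bet is unique) and the degenerate case $k=0$.
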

This finite-POMDP separation non-vacuously motivates Theorem~\ref{thm:po-threshold-recovery}: identical fair-bet behavior can hide different predictive states, while threshold queries recover their magnitudes.

Next, we show that if we extend the tests to ask the \emph{same} test across \emph{multiple} thresholds, predictive state recovery is possible, as the agent's response curve across thresholds reveals the actual magnitude of $p_T(h)$, and average regret then forces those probabilities to be recoverable:
\paragraph{Threshold-bet setup.}
For a test $T=(\alpha,W)$ and threshold $\lambda\in[0,1]$,
let $g_{T,\lambda}$ be the bet comparing the test success
probability $p_T(h)$ against an independent lottery of success
probability $\lambda$. Write
\[
\begin{aligned}
q_{T,\lambda}(h)&:=\pi(L\mid h,g_{T,\lambda}),\\
V^\pi(h;g_{T,\lambda})
&:=q_{T,\lambda}(h)p_T(h)+(1-q_{T,\lambda}(h))\lambda,\\
V^\star(h;g_{T,\lambda})
&:=\max\{p_T(h),\lambda\},\\
\delta_{T,\lambda}(\pi;h)
&:=1-\frac{V^\pi(h;g_{T,\lambda})}{V^\star(h;g_{T,\lambda})}.
\end{aligned}
\]
For $K\ge1$, let $\lambda_k=(k-\frac12)/K$ and define
\begin{equation}
\label{eq:pt-hat}
\hat p_T(h):=\frac1K\sum_{k=1}^K q_{T,\lambda_k}(h),
\qquad
\varepsilon_K:=2\bar\delta_K+\frac{1}{4K^2}.
\end{equation}

\begin{theorem}[Predictive-state recovery from threshold bets] \label{thm:po-threshold-recovery} Fix $\ell\ge1$ and suppose $D$ is supported on tests $T=(\alpha,W)$ with $|\alpha|\le\ell$. Under the threshold-bet setup, assume \begin{equation} \label{eq:avg-regret-threshold} \mathbb E_{h\sim\mathcal H}\,\mathbb E_{T\sim D} \left[ \frac1K\sum_{k=1}^K\delta_{T,\lambda_k}(\pi;h) \right] \le\bar\delta_K . \end{equation} Then \begin{equation} \label{eq:threshold-recovery-main} \mathbb E_{h\sim\mathcal H}\,\mathbb E_{T\sim D} \Big[ \big(\hat p_T(h)-p_T(h)\big)^2 \Big] \le \varepsilon_K . \end{equation} In particular, if $D$ is uniform over a finite family $\mathcal T_\ell=\{T_1,\dots,T_d\}$ of tests of depth at most $\ell$, and \[ \hat\eta_{\mathcal T_\ell}(h) := \big(\hat p_{T_1}(h),\dots,\hat p_{T_d}(h)\big), \] then \begin{equation} \label{eq:threshold-recovery-vector} \mathbb E_{h\sim\mathcal H} \left[ \frac1d \big\|\hat\eta_{\mathcal T_\ell}(h)-\eta_{\mathcal T_\ell}(h)\big\|_2^2 \right] \le \varepsilon_K . \end{equation} \end{theorem}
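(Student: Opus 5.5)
The plan is to use the fact that normalized regret at each threshold is already a weighted wrong-action mass, with weight given by the distance $|p_T(h)-\lambda_k|$. Summed across thresholds, this forces the agent's response curve $k\mapsto q_{T,\lambda_k}(h)$ to be close to the ideal step function $\mathbf 1\{p_T(h)>\lambda_k\}$, and its average to be close to $p_T(h)$.

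Fix $(h,T)$ and write $p:=p_T(h)$, $q_k:=q_{T,\lambda_k}(h)$. First, applying Lemma~\ref{lem:bet} with $u_L=p$ and $u_R=\lambda_k$ gives
\[
\delta_{T,\lambda_k}(\pi;h)=w_k\,\frac{|p-\lambda_k|}{\max\{p,\lambda_k\}}\;\ge\; w_k\,|p-\lambda_k|,
\]
where $w_k$ is the wrong-action mass (equal to $1-q_k$ if $\lambda_k<p$, and to $q_k$ if $\lambda_k>p$). The inequality uses only $\max\{p,\lambda_k\}\le 1$. Writing $s_k:=\mathbf 1\{\lambda_k<p\}$, we have $w_k=|s_k-q_k|$. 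Hence
\[
\frac1K\sum_k \delta_{T,\lambda_k}\;\ge\;\frac1K\sum_k |s_k-q_k|\,|p-\lambda_k|=:\Phi .
\]

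Second, we decompose the estimation error as
\[
\hat p_T(h)-p=\Big(\tfrac1K\textstyle\sum_k (q_k-s_k)\Big)+\Big(\tfrac1K\textstyle\sum_k s_k-p\Big)=:e+r .
\]
With midpoints $\lambda_k=(k-\tfrac12)/K$, $\tfrac1K\sum_k s_k$ is the midpoint-rule approximation of $p$, so $|r|\le\tfrac1{2K}$.

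The key step is a \emph{rearrangement} lower bound on $\Phi$ in terms of the total wrong mass. Suppose $e<0$, so the agent under-bets $L$; the case $e>0$ is symmetric. Let $x:=-e$. Then the wrong mass on thresholds below $p$ totals at least $x$ in $1/K$ units. The cheapest way to spend that mass is to place it on the thresholds nearest $p$. Their distances are at least $\rho,\rho+\tfrac1K,\rho+\tfrac2K,\dots$, where $\rho$ is the distance from $p$ to the nearest midpoint on that side. Comparing with the continuous version, where the minimum of $\int_0^x t\,dt$ is $x^2/2$, should give
\[
\Phi\;\ge\;\tfrac12\,(|e|+\text{sign-aligned part of }r)^2-\tfrac1{8K^2}.
\]
The cleanest route is to prove directly the single inequality
\[
\Phi\ \ge\ \tfrac12\big(\hat p_T(h)-p\big)^2-\tfrac1{8K^2}.
\]
To do this, I would compare the discrete sum to $\int_0^1|q(\lambda)-\mathbf 1\{\lambda<p\}|\,|p-\lambda|\,d\lambda$ for the piecewise-constant interpolant $q(\lambda)$. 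In the continuous setting the bound $\ge\tfrac12(\int(q-\mathbf 1\{\lambda<p\})d\lambda)^2$ is a one-line bathtub/rearrangement argument. The cross-cell discretization error is what produces the $\tfrac1{8K^2}$ slack.

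This bookkeeping is the main obstacle. It requires handling the cell containing $p$ carefully, since there $\lambda_k$ may fall on either side of $p$, and verifying that the constant comes out as $\tfrac1{8K^2}$ (so that $\varepsilon_K=2\bar\delta_K+\tfrac1{4K^2}$). If the sharp constant is delicate, I would first try splitting on which side of the midpoint of its cell $p$ lies, and treat the remaining cells exactly as sums of arithmetic progressions.

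Given the pointwise inequality, multiplying by $2$ gives
\[
(\hat p_T(h)-p_T(h))^2\le \frac2K\sum_k\delta_{T,\lambda_k}(\pi;h)+\frac1{4K^2}.
\]
Taking $\mathbb E_{h\sim\mathcal H}\mathbb E_{T\sim D}$ and invoking \eqref{eq:avg-regret-threshold} yields \eqref{eq:threshold-recovery-main}; the depth bound $|\alpha|\le\ell$ only ensures that $p_T(h)$ is well defined. For \eqref{eq:threshold-recovery-vector}, take $D$ uniform on $\mathcal T_\ell$. Then $\mathbb E_{T\sim D}[(\hat p_T-p_T)^2]=\tfrac1d\|\hat\eta_{\mathcal T_\ell}(h)-\eta_{\mathcal T_\ell}(h)\|_2^2$ identically, so the vector bound is the same statement.
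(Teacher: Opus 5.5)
Your reduction coincides with the paper's. The inequality $\delta_{T,\lambda_k}\ge w_k|p-\lambda_k|$ is exactly the paper's observation that the unnormalized regret $r_k=V^\star-V^\pi$ is at most $\delta_{T,\lambda_k}$ because $V^\star\le 1$. Your $\Phi$ is the paper's $R$, and your target $\Phi\ge\frac12(\hat p-p)^2-\frac{1}{8K^2}$ is precisely the paper's key bound. The gap is that this inequality, the only nontrivial content of the theorem, is asserted rather than proved. You sketch a continuous comparison, call the discretization ``the main obstacle,'' and leave open whether the slack is really $\frac{1}{8K^2}$. (The $e+r$ split does not substitute for it: bounding $|r|\le\frac{1}{2K}$ separately produces a cross term of order $|e|/K$ rather than $O(K^{-2})$, so the direct bound is genuinely needed.)

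Your route does close, with exactly the right constant. Let $q(\lambda)$ be the piecewise-constant interpolant on the cells $C_k=[\frac{k-1}{K},\frac{k}{K}]$, and set $\Psi:=\int_0^1|q(\lambda)-\mathbf 1\{\lambda<p\}|\,|p-\lambda|\,d\lambda$.

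\begin{itemize}
\item On every cell whose interior does not contain $p$, $|p-\lambda|$ is affine of constant sign and $\mathbf 1\{\lambda<p\}= s_k$ a.e. So the midpoint rule is exact, and that cell contributes identically to $\Phi$ and $\Psi$.
\item On the cell $C_j$ whose interior contains $p$ (if any), put $a:=p-\frac{j-1}{K}$ and $b:=\frac{j}{K}-p$. The contribution to $\Psi$ is $(1-q_j)\frac{a^2}{2}+q_j\frac{b^2}{2}$. A direct computation shows it exceeds the contribution to $\Phi$ by exactly $\frac12\min\{a,b\}^2\le\frac{1}{8K^2}$, independently of $q_j$.
\item Finally, $\Psi=\int_0^1\bigl(q(\lambda)-\mathbf 1\{\lambda<p\}\bigr)(\lambda-p)\,d\lambda\ge\frac12(\hat p-p)^2$ by the bathtub argument, since $\int q=\hat p$ and $\int\mathbf 1\{\lambda<p\}=p$.
\end{itemize}

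The paper sidesteps the cell bookkeeping by expanding $R=\frac{1}{K}\sum_{\lambda_k\le p}(p-\lambda_k)-p\hat p+\frac{1}{K}\sum_k\lambda_k q_k$ and bounding the pieces separately. The first term equals $\frac{p^2}{2}-\frac{(Kp-J)^2}{2K^2}$, with $J$ the nearest integer to $Kp$; this is the same slack $\frac12\operatorname{dist}(p,\frac{1}{K}\mathbb Z)^2$ you would find. The last term is at least $\frac{\hat p^2}{2}$ by rearrangement at fixed average $\hat p$, which is your bathtub step in discrete form. Completing the square then gives the bound in one line. With the key inequality in hand, your remaining steps (multiplying by $2$, averaging via the regret hypothesis, and the vector identity for uniform $D$) match the paper.
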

Observe that for $K=1$ we recover the counterexample in Proposition~\ref{prop:po-no-recovery} where even for $\bar{\delta}_K = 0$ we get a recovery bound of $1/4$, thereby only giving us information about the \emph{sign} of $p_T(h)$ rather than its underlying value.

The advantage of Theorem~\ref{thm:po-threshold-recovery} is its generality as a recovery method under partial observability, which can be \emph{repeatedly} applied to any history-test pair $(h, T)$, without making any additional assumptions about how the environment dynamics evolve.
This makes it an appealing approach in practice to potentially apply to frontier agents in \emph{open-ended} real-world settings. 
However, it may still be of independent theoretical interest to study under what additional constraints one could recover the explicit compact predictive dynamics operator (the PSR operator) rather than the predictive coordinates $p_T(h)$ on each tested family, which one has to run per test.
Specifically, we show in Theorem~\ref{thm:psr-operator-recovery} that an average-regret recovery is possible under linear finite-dimensional PSR operators, which in practice can hold in restricted, resettable, finite-workflow deployments:
\paragraph{Linear-PSR operator setup.} Let $\mathcal T=\{T_1,\dots,T_d\}$ be a finite core test set. For $\sigma=(a,o)\in\mathcal A\times\mathcal O$ and $T=(\alpha,W)$, write \[ \sigma\circ T:=((a,\alpha),\{o\}\times W). \] Define 
\begin{equation*} 
\begin{split}
&s(h):=(p_{T_1}(h),\dots,p_{T_d}(h)),\\
&s_\sigma(h):=(p_{\sigma\circ T_1}(h),\dots,p_{\sigma\circ T_d}(h)).
\end{split}
\end{equation*}
Assume linear PSR dynamics: for each $\sigma$ there is $B_\sigma\in\mathbb R^{d\times d}$ such that \begin{equation} \label{eq:linear-update} s_\sigma(h)=B_\sigma s(h) \qquad\text{for all histories }h. \end{equation} Choose histories $h^1,\dots,h^d$ such that $S:=[s(h^1)\ \cdots\ s(h^d)]$ is invertible, and set \[ Y_\sigma:=[s_\sigma(h^1)\ \cdots\ s_\sigma(h^d)] =B_\sigma S . \] Using the threshold estimator in Eq.~\eqref{eq:pt-hat}, define $\hat s,\hat s_\sigma,\hat S,\hat Y_\sigma$ analogously.

\begin{theorem}[Linear-PSR operator recovery from threshold bets] \label{thm:psr-operator-recovery} Assume the linear-PSR operator setup and the threshold-bet average-regret bound of Theorem~\ref{thm:po-threshold-recovery} for all tests in \[ \mathcal T\cup \{\sigma\circ T_i:\sigma\in\mathcal A\times\mathcal O,\ i=1,\dots,d\}. \] Then \begin{equation} \label{eq:SY-bound} \|\hat S-S\|_F^2+ \sum_{\sigma\in\mathcal A\times\mathcal O} \|\hat Y_\sigma-Y_\sigma\|_F^2 \le d^2(1+|\mathcal A||\mathcal O|)\varepsilon_K . \end{equation} If additionally \begin{equation} \label{eq:S-cond} d\sqrt{(1+|\mathcal A||\mathcal O|)\varepsilon_K} \le \frac{1}{2\|S^{-1}\|_2}, \end{equation} then $\hat S$ is invertible and, for $\hat B_\sigma:=\hat Y_\sigma\hat S^{-1}$, \begin{equation} \label{eq:B-bound} \sum_\sigma\|\hat B_\sigma-B_\sigma\|_F^2 \le C(S,Y)\varepsilon_K , \end{equation} where \[ C(S,Y):= 8d^2(1+|\mathcal A||\mathcal O|) \left( \|S^{-1}\|_2^2+ \|S^{-1}\|_2^4\sum_\sigma\|Y_\sigma\|_2^2 \right). \] Thus, vanishing average threshold-regret recovers the linear-PSR operators $(B_\sigma)_{\sigma\in\mathcal A\times\mathcal O}$. \end{theorem}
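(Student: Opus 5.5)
The plan is to apply Theorem~\ref{thm:po-threshold-recovery} entrywise and then run a standard matrix-perturbation argument for $\hat B_\sigma=\hat Y_\sigma\hat S^{-1}$.

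\textbf{Step 1: bound \eqref{eq:SY-bound}.} Every entry of $S$ and $Y_\sigma$ is a test probability $p_T(h^j)$ with $T\in\mathcal T\cup\{\sigma\circ T_i\}$.
\begin{itemize}
\item I read the hypothesis ``the threshold-bet average-regret bound of Theorem~\ref{thm:po-threshold-recovery} for all tests'' as yielding the conclusion \eqref{eq:threshold-recovery-main} for each relevant pair $(h^j,T)$ individually. Concretely, take $\mathcal H$ as a point mass at $h^j$ and $D$ as a point mass at $T$, or require the per-pair bound.
\item This gives $(\hat p_T(h^j)-p_T(h^j))^2\le\varepsilon_K$ for each of the $d$ columns, $d$ rows, and the $1+|\mathcal A||\mathcal O|$ matrices.
\item Summing the $d^2(1+|\mathcal A||\mathcal O|)$ squared entry errors gives \eqref{eq:SY-bound} directly.
\end{itemize}
The main subtlety is this interpretive step: the hypothesis must be strong enough to control each anchor history. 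An average-only bound would carry a weighting factor instead. I would state the per-pair reading explicitly.

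\textbf{Step 2: invertibility of $\hat S$.} Let $\Delta_S=\hat S-S$ and $\Delta_\sigma=\hat Y_\sigma-Y_\sigma$.
\begin{itemize}
\item From Step~1, $\|\Delta_S\|_2\le\|\Delta_S\|_F\le d\sqrt{(1+|\mathcal A||\mathcal O|)\varepsilon_K}\le 1/(2\|S^{-1}\|_2)$ by \eqref{eq:S-cond}.
\item Hence $\|S^{-1}\Delta_S\|_2\le\tfrac12$.
\item A Neumann series on $\hat S=S(I+S^{-1}\Delta_S)$ shows that $\hat S$ is invertible with $\|\hat S^{-1}\|_2\le 2\|S^{-1}\|_2$.
\item We also get $\hat S^{-1}-S^{-1}=-\hat S^{-1}\Delta_S S^{-1}$, so $\|\hat S^{-1}-S^{-1}\|_2\le 2\|S^{-1}\|_2^2\|\Delta_S\|_F$.
\end{itemize}

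\textbf{Step 3: operator error.} Decompose
\[
\hat B_\sigma-B_\sigma=\Delta_\sigma\hat S^{-1}+Y_\sigma(\hat S^{-1}-S^{-1}).
\]
\begin{itemize}
\item Using $\|AB\|_F\le\|A\|_F\|B\|_2$ and $\|AB\|_F\le\|A\|_2\|B\|_F$, together with $(x+y)^2\le 2x^2+2y^2$:
\[
\|\hat B_\sigma-B_\sigma\|_F^2\le 8\|S^{-1}\|_2^2\|\Delta_\sigma\|_F^2+8\|Y_\sigma\|_2^2\|S^{-1}\|_2^4\|\Delta_S\|_F^2 .
\]
\item Summing over $\sigma$, bound each of $\|\Delta_\sigma\|_F^2$ and $\|\Delta_S\|_F^2$ by the total from \eqref{eq:SY-bound}, i.e.\ by $d^2(1+|\mathcal A||\mathcal O|)\varepsilon_K$.
\item This yields
\[
\sum_\sigma\|\hat B_\sigma-B_\sigma\|_F^2\le 8d^2(1+|\mathcal A||\mathcal O|)\big(\|S^{-1}\|_2^2+\|S^{-1}\|_2^4\textstyle\sum_\sigma\|Y_\sigma\|_2^2\big)\varepsilon_K=C(S,Y)\varepsilon_K.
\end{itemize}
The first term uses $\sum_\sigma\|\Delta_\sigma\|_F^2\le$ the total. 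The remaining steps are routine once Step~1 is justified.
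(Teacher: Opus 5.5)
Your proposal is correct and matches the paper's proof essentially verbatim in Steps~2--3. Both use the Neumann-series bounds $\|\hat S^{-1}\|_2\le 2\|S^{-1}\|_2$ and $\|\hat S^{-1}-S^{-1}\|\le 2\|S^{-1}\|_2^2\|\hat S-S\|_F$, the same two-term decomposition of $\hat B_\sigma-B_\sigma$, then squaring and summing against \eqref{eq:SY-bound}.

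The only difference is Step~1, where your per-pair strengthening of the hypothesis is unnecessary. The paper applies Theorem~\ref{thm:po-threshold-recovery} with $\mathcal H$ uniform on $\{h^1,\dots,h^d\}$ and $D$ uniform on the indexed family of $d(1+|\mathcal A||\mathcal O|)$ tests, counting each $(\sigma,j)$ separately. The mean squared error over the $d^2(1+|\mathcal A||\mathcal O|)$ matrix entries is then at most $\varepsilon_K$, and multiplying by the entry count gives \eqref{eq:SY-bound} exactly. The ``weighting factor'' you anticipated is precisely that $d^2(1+|\mathcal A||\mathcal O|)$, so the weaker uniform-average hypothesis already suffices.
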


\subsection{Memory necessity}
\paragraph{No-aliasing setup.} Let $M=f(h)$ be any candidate memory statistic, as in Eq.~\eqref{eq:memory}, and let $\mathcal P$ be a distribution over paired histories $(h,h')$ with the same last observation. Define \[ \mathsf{Alias}_M:=\{(h,h'):M(h)=M(h')\}. \] For $\gamma\in(0,\tfrac12)$ and test distribution $D$, assume measurable witness sets $S_\gamma(h,h')$ such that, whenever $T\in S_\gamma(h,h')$, \[ p_T(h)\ge\frac12+\gamma,\qquad p_T(h')\le\frac12-\gamma . \] Define the witnessed aliasing mass and pair-regret \[ q^{\mathsf{Alias}}_\gamma(M):= \Pr_{(h,h')\sim\mathcal P,T\sim D} \big((h,h')\in\mathsf{Alias}_M, T\in S_\gamma(h,h')\big) \] \[ \bar\delta_{\mathcal P}(\pi):= \mathbb E_{(h,h')\sim\mathcal P} \frac12\left( \mathbb E_{T\sim D}[\delta_T(\pi;h)] + \mathbb E_{T\sim D}[\delta_T(\pi;h')] \right). \] All subsequent recoding statements are on the support of $\mathcal P$, not globally over all histories. \begin{theorem}[Memory necessity] \label{thm:memory} Under the no-aliasing setup, any $M$-based policy $\pi$ satisfies \begin{equation} \label{eq:thm1-lb-M} \bar\delta_{\mathcal P}(\pi) \ge q^{\mathsf{Alias}}_\gamma(M)\frac{c(\gamma)}{2}. \end{equation} Consequently, if $\bar\delta_{\mathcal P}(\pi)<q^{\mathsf{Alias}}_\gamma(M)c(\gamma)/2$, then $\pi$ cannot be $M$-based: low pair-regret rules out aliasing histories that induce opposite large-margin bets. \end{theorem}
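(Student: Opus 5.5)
The plan is a pointwise argument on aliased pairs, averaged over $\mathcal P$ and $D$, with Lemma~\ref{lem:bet} supplying the per-test regret. Fix a pair $(h,h')\in\mathsf{Alias}_M$ and a witness test $T\in S_\gamma(h,h')$. Because $\pi$ is $M$-based (Eq.~\eqref{eq:memory}) and $M(h)=M(h')$, it uses the same report probability at both histories: $q:=q_T(h)=q_T(h')$. By the witness condition $p_T(h)\ge\tfrac12+\gamma$, so $L$ is optimal at $h$ and $w_T(h)=1-q$. Likewise $p_T(h')\le\tfrac12-\gamma$, so $R$ is optimal at $h'$ and $w_T(h')=q$. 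Hence $w_T(h)+w_T(h')=1$. This is the key structural fact: aliasing forces the wrong-action masses at the two histories to sum to one.

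Next I invoke the betting case of Lemma~\ref{lem:bet}, Eq.~\eqref{eq:bet-regret}, which gives $\delta_T(\pi;h)=w_T(h)\,\frac{4m_T(h)}{1+2m_T(h)}$. The map $m\mapsto 4m/(1+2m)$ is increasing, and both margins are at least $\gamma$, so each factor is at least $c(\gamma)$. Therefore
\[
\delta_T(\pi;h)+\delta_T(\pi;h')\ \ge\ c(\gamma)\big(w_T(h)+w_T(h')\big)\ =\ c(\gamma).
\]
For every other configuration (non-aliased pairs, or tests outside $S_\gamma(h,h')$), I only use $\delta_T\ge0$.

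Finally I average. Multiplying the pointwise bound by $\tfrac12$ and writing $\tfrac12(\delta_T(\pi;h)+\delta_T(\pi;h'))\ge \tfrac{c(\gamma)}2\mathbf 1\{(h,h')\in\mathsf{Alias}_M,\ T\in S_\gamma(h,h')\}$, I take $\mathbb E_{(h,h')\sim\mathcal P}\mathbb E_{T\sim D}$. By linearity and Fubini (regret is bounded and the witness sets are assumed measurable), the left side is exactly $\bar\delta_{\mathcal P}(\pi)$ and the right side is $q^{\mathsf{Alias}}_\gamma(M)\,c(\gamma)/2$, which gives Eq.~\eqref{eq:thm1-lb-M}. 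The consequence stated in the theorem is then just the contrapositive.

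The main point needing care is making sure the policy is really identical on both members of the pair for the same goal $g_T$. This holds because the factorization in Eq.~\eqref{eq:memory} applies for every betting goal and the report bit does not depend on the realized last observation beyond $M$. A second check is that Lemma~\ref{lem:bet}'s tie convention ($L$ optimal when $u_L\ge u_R$) does not interfere; it cannot, since the witness condition gives strict margins $\ge\gamma>0$ on both sides.
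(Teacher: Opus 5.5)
Your proposal is correct and follows essentially the same argument as the paper: $M$-basedness forces $q_T(h)=q_T(h')$ on aliased pairs, and the witness condition then makes $w_T(h)+w_T(h')=1$. The betting identity $\delta_T=w_T\cdot\frac{4m_T}{1+2m_T}\ge c(\gamma)\,w_T$ gives pair-averaged regret at least $c(\gamma)/2$ on the aliasing--witness event, and averaging over $\mathcal P\times D$ yields Eq.~\eqref{eq:thm1-lb-M}. The only difference is cosmetic: you invoke the betting case of Lemma~\ref{lem:bet} with $u_L=p_T(h)$, $u_R=1-p_T(h)$, whereas the paper cites the identical pointwise identity \eqref{eq:pointwise} from the proof of Theorem~\ref{thm:predictive}.
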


In other words, if a policy treats two histories the same while the correct bet differs with high confidence, then it must make errors on at least one of them. 
Therefore, low regret rules out memory states that collapse histories needing different confident predictions.

\section{Structured task families: modularity, tradeoffs, and representational match}
\label{sec:structure}
So far for world modeling and memory necessity, we have not introduced major assumptions to the task families we expect the agent to be competent at.
But it turns out that for average-case competence under different task families, we get interesting properties that have to do with the necessity of modularity, tracking internal drives, and inner representational match between agents.
These can be derived very cleanly as corollaries of our previous Theorems~\ref{thm:predictive} and~\ref{thm:memory}, leveraging the same underlying machinery of average-case betting and PSR.
Throughout, we work in the POMDP betting setup of \S\ref{sec:wm-po}, with $\gamma\in(0,\tfrac12)$.

\paragraph{Convention (vanishing regret).}
In what follows, the convention $\bar{\delta}_\mathcal{P} \to 0$ means there exists a \emph{sequence} of admissible policies $(\pi_k)$ under $(\mathcal{P},D)$ with $\bar{\delta}_\mathcal{P}(\pi_k)\to 0$; equivalently, for every $\varepsilon>0$ there exists admissible $\pi$ with $\bar{\delta}_\mathcal{P}(\pi)\le\varepsilon$.

\begin{corollary}[Informational modularity from block-structured tests]
\label{cor:modularity}
Assume $\supp(D)=\bigsqcup_{i=1}^K\mathcal T_i$, with
$p_i:=D(\mathcal T_i)>0$ and $D_i:=D(\cdot\mid T\in\mathcal T_i)$.
For each block $i$, suppose the no-aliasing setup holds with test
distribution $D_i$ and witness sets
$S_{\gamma,i}(h,h')\subseteq\mathcal T_i$. Let
$q^{\mathsf{Alias}}_{\gamma,i}(M)$ denote the corresponding witnessed
aliasing mass, and let $\bar\delta_{\mathcal P}(\pi)$ denote pair-regret
under the original mixture $D$. If $\pi$ is $M$-based, then
\[
q^{\mathsf{Alias}}_{\gamma,i}(M)
\le
\frac{2\,\bar\delta_{\mathcal P}(\pi)}{p_i\,c(\gamma)}
\qquad\text{for every }i.
\]
\newline
Thus, as $\bar\delta_{\mathcal P}(\pi)\to0$, aliasing of $\gamma$-separable pairs vanishes within every block.
\end{corollary}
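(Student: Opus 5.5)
The plan is to apply Theorem~\ref{thm:memory} block by block, and then convert block-conditional regret into mixture regret using the disjointness of the blocks.

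\textbf{Step 1: define per-block pair-regret.} For each $i$ set
\[
\bar\delta_{\mathcal P,i}(\pi):=\mathbb E_{(h,h')\sim\mathcal P}\tfrac12\big(\mathbb E_{T\sim D_i}[\delta_T(\pi;h)]+\mathbb E_{T\sim D_i}[\delta_T(\pi;h')]\big),
\]
which is exactly the pair-regret of the no-aliasing setup with test distribution $D_i$. The policy $\pi$ is $M$-based. By hypothesis, the witness sets $S_{\gamma,i}(h,h')\subseteq\mathcal T_i$ satisfy the margin conditions. Theorem~\ref{thm:memory}, applied verbatim with $D$ replaced by $D_i$, then gives
\[
\bar\delta_{\mathcal P,i}(\pi)\ge q^{\mathsf{Alias}}_{\gamma,i}(M)\,\frac{c(\gamma)}{2}.
\]
The one point to check is that nothing in Theorem~\ref{thm:memory} depends on the test distribution beyond these witness properties. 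Its proof only uses that on aliased pairs the same bet is placed at $h$ and $h'$. By Lemma~\ref{lem:bet}, the wrong-action masses at $h$ and $h'$ then sum to $1$ on witnessed tests, so the regret contributions sum to at least $c(\gamma)$. This argument is insensitive to which distribution $D_i$ draws the tests.

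\textbf{Step 2: decompose the mixture.} Since $\supp(D)$ is the disjoint union of the $\mathcal T_j$, we have $D=\sum_j p_j D_j$. Linearity of expectation over $T$, together with Fubini over $(h,h')$, gives
\[
\bar\delta_{\mathcal P}(\pi)=\sum_{j=1}^K p_j\,\bar\delta_{\mathcal P,j}(\pi).
\]
Each term is nonnegative because $\delta_T\in[0,1]$, so $\bar\delta_{\mathcal P}(\pi)\ge p_i\,\bar\delta_{\mathcal P,i}(\pi)$ for every $i$.

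\textbf{Step 3: combine.} Chaining the two bounds gives
\[
p_i\,q^{\mathsf{Alias}}_{\gamma,i}(M)\,\frac{c(\gamma)}{2}\le\bar\delta_{\mathcal P}(\pi).
\]
Dividing by $p_i c(\gamma)/2>0$ yields the claim. The vanishing statement follows by taking $\bar\delta_{\mathcal P}(\pi_k)\to0$ along the sequence in the vanishing-regret convention, with $M$ fixed or varying with $k$. Each block's witnessed aliasing mass is then bounded by a quantity tending to zero.

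\textbf{Main obstacle.} The main obstacle is only bookkeeping: verifying that Theorem~\ref{thm:memory} applies to the conditional distribution $D_i$. This reduces to checking measurability and that $D_i$ is well defined, which holds since $p_i>0$. There is no genuine analytic difficulty.
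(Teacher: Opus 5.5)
Your proposal is correct and matches the paper's proof essentially step for step. The paper likewise defines the blockwise pair-regret $\bar\delta_{\mathcal P,i}(\pi)$, applies Theorem~\ref{thm:memory} with test distribution $D_i$, and uses $\bar\delta_{\mathcal P}(\pi)=\sum_j p_j\,\bar\delta_{\mathcal P,j}(\pi)\ge p_i\,\bar\delta_{\mathcal P,i}(\pi)$ by nonnegativity before combining the two bounds.
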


%Thus, having a modular task distribution to be competent at \emph{necessitates} internal informational modularity in the agent.

\begin{corollary}[Tradeoff/regime tracking from shifting mixtures] \label{cor:mixtures} Let the evaluation draw a latent regime $I\sim\Lambda$ and then $T\sim D_I$, so that the marginal test distribution is $D=\sum_i\Lambda(i)D_i$; the supports of the $D_i$ need not be disjoint. Let $\mathcal P$ be a paired-history distribution with regime labels $I(h)$ and assume the no-aliasing setup holds for $D$ with witnesses $S_\gamma(h,h')$ satisfying \[ T\in S_\gamma(h,h') \implies I(h)\ne I(h'). \] Then any $M$-based policy $\pi$ satisfies
\begin{equation*}
\begin{aligned}
&\Pr_{(h,h')\sim\mathcal P,\ T\sim D}
\!\left(
M(h)=M(h'),\ I(h)\ne I(h'),T\in S_\gamma(h,h')
\right)\\
&\le
\frac{2\,\bar\delta_{\mathcal P}(\pi)}{c(\gamma)}.
\end{aligned}
\end{equation*} 
Thus, as $\bar\delta_{\mathcal P}(\pi)\to0$, memory cannot be insensitive to regime changes that flip a $\gamma$-margin optimal bet for the same queried test. \end{corollary}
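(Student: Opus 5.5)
The plan is to reduce Corollary~\ref{cor:mixtures} directly to Theorem~\ref{thm:memory}, applied with the marginal test distribution $D=\sum_i\Lambda(i)D_i$. The regime structure only enters through the witness condition. Since the no-aliasing setup is assumed to hold for $D$ with witnesses $S_\gamma(h,h')$, Theorem~\ref{thm:memory} gives, for any $M$-based $\pi$,
\[
q^{\mathsf{Alias}}_\gamma(M)\le \frac{2\,\bar\delta_{\mathcal P}(\pi)}{c(\gamma)} .
\]
It therefore suffices to show that the event in the corollary coincides with (or is contained in) the witnessed aliasing event defining $q^{\mathsf{Alias}}_\gamma(M)$.

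\textbf{Event inclusion.} By the hypothesis $T\in S_\gamma(h,h')\implies I(h)\neq I(h')$, we have
\[
\{M(h)=M(h'),\ T\in S_\gamma(h,h')\}=\{M(h)=M(h'),\ I(h)\neq I(h'),\ T\in S_\gamma(h,h')\}.
\]
So the probability in the statement, taken under $(h,h')\sim\mathcal P$ and $T\sim D$, equals $q^{\mathsf{Alias}}_\gamma(M)$. Chaining this equality with the bound from Theorem~\ref{thm:memory} yields the claim.

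\textbf{Sampling scheme.} The one point needing care is that the evaluation draws $I\sim\Lambda$ and then $T\sim D_I$, rather than $T\sim D$ directly. I would note that the latent evaluation regime index is marginalized out. Because both the pair-regret $\bar\delta_{\mathcal P}(\pi)$ and the aliasing probability depend on $T$ only through its marginal law $D$, the mixture representation changes nothing. Non-disjoint supports are likewise harmless, since no conditioning on the component label is used; the regime labels $I(h)$ attached to histories play a role only through the witness hypothesis.

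\textbf{Vanishing regret.} The ``thus'' clause follows from the convention on vanishing regret: along a sequence $\pi_k$ with $\bar\delta_{\mathcal P}(\pi_k)\to0$, the bound forces the regime-crossing witnessed aliasing mass to $0$. I expect no real obstacle here. The main step to verify is that Theorem~\ref{thm:memory}'s proof uses only the marginal $D$ together with the opposite-sign $\gamma$-margin property of witnesses, which is exactly what the assumed no-aliasing setup provides.
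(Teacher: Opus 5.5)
Your proposal is correct and follows the paper's own proof. The paper likewise notes that the witness hypothesis makes the condition $I(h)\ne I(h')$ redundant, so the event equals the witnessed aliasing event; it then applies Lemma~\ref{lem:alias_bound}, which is just Theorem~\ref{thm:memory} rearranged, with the marginal $D$.
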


Thus, if two regimes can occur under the same last observation and they induce opposite $\gamma$-margin optimal bets for the \emph{same} queried test on nontrivial mass, then low pair-regret on the \emph{same} distribution forces $M(h)$ to distinguish the regime whenever it matters.
More generally, Corollary~\ref{cor:mixtures} implies that competence under mixtures of task distributions provides a normative pressure for maintaining persistent, internal variables that track latent evaluative conditions; in embodied settings, such variables can be viewed as analogous to affective or homeostatic modulators studied in affective neuroscience that globally influence policy, attention, and learning across tasks \citep{ekman1992argument,barrett2017theory}.
Importantly, this is a structural claim about functional organization—global, task-general modulation of behavior under uncertainty---rather than a commitment to any particular theory of emotion or phenomenology.

\begin{corollary}[Representational convergence under $\gamma$-minimality, up to invertible recoding] \label{cor:rep_match} Fix $D$ and $\gamma\in(0,1/2)$. Define the $\gamma$-coarsened decision profile $\ell_D^\gamma(h):= \big(\ell_T^\gamma(h)\big)_{T\in\supp(D)}$, where: \[\ell_T^\gamma(h):= \begin{cases} L,& p_T(h)\ge\tfrac12+\gamma,\\ R,& p_T(h)\le\tfrac12-\gamma,\\ \bot,& \text{otherwise}. \end{cases} \] Let $M_1=f_1(h)$ and $M_2=f_2(h)$ be two memory representations with $M_j$-based policies $\pi_j$. Assume, for $j=1,2$, that $\bar\delta_{\mathcal P}(\pi_j)\to0$, that $M_j$ is $\gamma$-minimal, \[ \ell_D^\gamma(h)=\ell_D^\gamma(h')\implies M_j(h)=M_j(h'), \] and that the witnesses are $\gamma$-complete: for $\mathcal P$-a.e. pair, \[ \ell_D^\gamma(h)\ne\ell_D^\gamma(h') \implies D(S_\gamma(h,h'))>0. \] Then, on the support of $\mathcal P$, each $M_j$ induces exactly the partition given by $\ell_D^\gamma$. Hence $M_1$ and $M_2$ agree up to invertible recoding: there exist measurable maps $\varphi,\psi$ such that almost surely \[ M_1=\varphi(M_2),\qquad M_2=\psi(M_1). \] \end{corollary}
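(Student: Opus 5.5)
The plan is to show two inclusions of partitions on the support of $\mathcal P$: $\gamma$-minimality gives that each $M_j$ is at most as fine as $\ell_D^\gamma$, and Theorem~\ref{thm:memory} combined with $\gamma$-completeness gives that, in the vanishing-regret limit, $M_j$ is at least as fine as $\ell_D^\gamma$. Once each $M_j$ induces exactly the partition of $\ell_D^\gamma$, the recoding maps are obtained by factoring through $\ell_D^\gamma$.

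\textbf{Coarseness.} By $\gamma$-minimality, $\ell_D^\gamma(h)=\ell_D^\gamma(h')$ implies $M_j(h)=M_j(h')$. Hence $M_j$ is constant on every level set of $\ell_D^\gamma$, so $M_j=\kappa_j\circ\ell_D^\gamma$ for some map $\kappa_j$ defined on the image of $\ell_D^\gamma$.

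\textbf{Refinement.} Apply Theorem~\ref{thm:memory} to each $M_j$-based policy $\pi_j$:
\[
q^{\mathsf{Alias}}_\gamma(M_j)\le \frac{2\bar\delta_{\mathcal P}(\pi_j)}{c(\gamma)}.
\]
Since $M_j$ is fixed along the sequence of admissible policies of the vanishing-regret convention, $q^{\mathsf{Alias}}_\gamma(M_j)$ does not depend on $k$. Letting $\bar\delta_{\mathcal P}(\pi_{j,k})\to0$ therefore forces $q^{\mathsf{Alias}}_\gamma(M_j)=0$. By Fubini this reads
\[
\mathbb E_{(h,h')\sim\mathcal P}\big[\mathbf 1\{M_j(h)=M_j(h')\}\,D(S_\gamma(h,h'))\big]=0,
\]
so for $\mathcal P$-a.e.\ pair, $M_j(h)=M_j(h')$ implies $D(S_\gamma(h,h'))=0$. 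By $\gamma$-completeness, $\ell_D^\gamma(h)\neq\ell_D^\gamma(h')$ implies $D(S_\gamma(h,h'))>0$. Taking the contrapositive, $\mathcal P$-a.s.\ $M_j(h)=M_j(h')$ implies $\ell_D^\gamma(h)=\ell_D^\gamma(h')$. Combined with coarseness, $\kappa_j$ is injective on the (a.s.) image of $\ell_D^\gamma$ over the support of $\mathcal P$, so $M_j$ induces exactly the partition of $\ell_D^\gamma$ there.

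\textbf{Recoding.} Set $\varphi:=\kappa_1\circ\kappa_2^{-1}$ on the image of $M_2$, extended arbitrarily elsewhere, and define $\psi$ symmetrically. Then $M_1=\varphi(M_2)$ and $M_2=\psi(M_1)$ almost surely.

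\textbf{Main obstacle.} The delicate step is the passage from a pairwise almost-sure statement to a statement about partitions. ``Induces the same partition on the support of $\mathcal P$'' only holds for pairs that $\mathcal P$ charges. I would interpret the claim relative to pairs in $\supp\mathcal P$, as the setup explicitly says, and take the union of exceptional null sets over the two representations.

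Measurability of $\varphi$ and $\psi$ is immediate when the profile and memory spaces are countable or finite, as in finite POMDP settings with finite $\supp(D)$. In general I would assume standard Borel spaces and invoke a measurable-inverse (Lusin--Souslin) argument for the injective map $\kappa_2$, restricted to a Borel set of full measure.
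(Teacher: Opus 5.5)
Your argument follows the paper's proof step for step. Minimality gives $M_j=\kappa_j\circ\ell_D^\gamma$, which is the paper's $a_j$. Theorem~\ref{thm:memory} (via Lemma~\ref{lem:alias_bound}) plus $\gamma$-completeness gives the pairwise refinement. Your $\kappa_1\circ\kappa_2^{-1}$ is the paper's $a_1\circ b_2$. Your remark that $M_j$ must stay fixed along the vanishing-regret sequence makes explicit something the paper leaves implicit.

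However, the sentence ``combined with coarseness, $\kappa_j$ is injective on the image of $\ell_D^\gamma$ over the support of $\mathcal P$'' does not follow, and this is a genuine gap. You sense it in your last paragraph but do not close it, and the paper's step~2 makes the same leap when it calls $b_j$ `well-defined almost surely'. Refinement only gives $M_j(h)=M_j(h')\Rightarrow\ell_D^\gamma(h)=\ell_D^\gamma(h')$ for pairs that $\mathcal P$ charges. Injectivity of $\kappa_j$ needs this for any two histories occurring in $\supp\mathcal P$, including pairs that $\mathcal P$ never draws. Reading the claim `relative to pairs' does not rescue it: $M_2=\psi(M_1)$ is a statement about single histories, so $\psi$ must be consistent across different pairs.

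Here is a concrete counterexample.
\begin{itemize}
\item Take histories $h_1,h_2,h_3$ with a common last observation; this is easily realized in a small POMDP.
\item Let $D$ be uniform on two tests with $p_{T_a}=(\tfrac12+\gamma,\tfrac12-\gamma,\tfrac12+\gamma)$ and $p_{T_c}=(\tfrac12+\gamma,\tfrac12-\gamma,\tfrac12)$ at $(h_1,h_2,h_3)$.
\item Let $\mathcal P$ be uniform on $\{(h_1,h_2),(h_3,h_2)\}$. $T_a$ witnesses both pairs, so completeness holds.
\item Set $M_2:=\ell_D^\gamma$ and $M_1:=\kappa\circ\ell_D^\gamma$, where $\kappa$ identifies only the profiles $(L,L)=\ell_D^\gamma(h_1)$ and $(L,\bot)=\ell_D^\gamma(h_3)$.
\end{itemize}
Both memories are $\gamma$-minimal. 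Both admit policies with zero pair-regret: at the merged state of $M_1$, bet $L$ on both tests, which is optimal at $h_3$ because $p_{T_c}(h_3)=\tfrac12$. Yet $M_1$ merges $h_1$ and $h_3$ although their profiles differ. Since $M_2(h_1)\neq M_2(h_3)$, no $\psi$ satisfies $M_2=\psi(M_1)$, even on the first marginal of $\mathcal P$ alone.

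So what your argument actually proves is only the pairwise statement $M_1(h)=M_1(h')\iff\ell_D^\gamma(h)=\ell_D^\gamma(h')\iff M_2(h)=M_2(h')$ for $\mathcal P$-a.e.\ $(h,h')$. To get global recoding maps you need an extra covering hypothesis. For example, require that $\mathcal P$ charges $(h,h'')$ or $(h'',h)$ whenever $h$ and $h''$ both occur in pairs in $\supp\mathcal P$ and $\ell_D^\gamma(h)\neq\ell_D^\gamma(h'')$. Under that hypothesis your construction goes through verbatim. Measurability is then automatic, since histories form a countable set.
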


Therefore, under the \emph{same} evaluation family, low pair-regret forces any \emph{sufficient} memory representation to preserve exactly the $\gamma$-margin decision-relevant distinctions between histories; if two agents are also $\gamma$-minimal (no extra splitting beyond those distinctions), then their internal memory states must agree up to a relabeling (invertible recoding) on the evaluation support.

\section{Discussion}
\label{sec:discuss}
This work develops quantitative ``selection theorems''~\citep{wentworth2021selection}: representation-theoretic conclusions derived from performance guarantees. Across fully observed and partially observed settings, we showed that low average-case regret on structured families of action-conditioned prediction tasks \emph{selects for} the predictive internal structure tested by the evaluation family. This yields recovery of the interventional kernel, predictive-state recovery and no-aliasing under partial observability, and further constraints from structured task families: informational modularity, regime-tracking state, and representational convergence up to invertible recoding.

Necessity is task-relative: different diagnostics select different structure. To our knowledge, these are the first quantitative selection theorems linking average-case regret over structured task families to necessary predictive-state and memory structure under partial observability. Unlike classical sufficiency results for belief representations~\citep{sondik1971,kaelbling1998}, our results show that regret-bounded competence alone---without worst-case optimality or determinism---imposes concrete internal constraints. The unifying perspective is that robust competence under uncertainty compresses admissible representations: when the evaluation distribution places mass on large-margin predictive distinctions, aliasing those distinctions incurs constant regret. Thus, predictive state, memory, modular decomposition, and regime-tracking variables are not merely architectural assumptions but \emph{consequences} of task demands.

These results resonate with empirical trends in representation learning and NeuroAI. Increasingly general task demands correlate with increasingly aligned representations across architectures and modalities, including alignment between artificial and biological systems in visual~\citep{yamins2014performance}, auditory~\citep{kell2018task}, motor~\citep{sussillo2015neural}, memory~\citep{nayebi2021explaining}, world-modeling~\citep{nayebi2023neural}, and language~\citep{schrimpf2021neural} brain areas, as well as between \emph{autonomous agents} and \emph{whole-brain data} in larval zebrafish~\citep{keller2025autonomous}. The Contravariance Principle in NeuroAI~\citep{cao2024explanatory} and the Platonic Representation Hypothesis in AI~\citep{huh2024platonic} both hypothesize that general learning pressures drive convergence toward a shared statistical model of reality. Our results provide a complementary formal lens: convergence can arise from \emph{shared} competence constraints and, under minimality, be reversibly mapped across agents as in Corollary~\ref{cor:rep_match}.

As AI systems become increasingly capable, our results suggest that organizational regularities should emerge across architectures: belief-like predictive state, modular specialization, persistent internal state, affective-like regime tracking~\citep{ekman1992argument,barrett2017theory}, and unified predictive representations. These regularities mirror cognitive-architecture themes such as global broadcast and modular processing~\citep{baars1997theater,blum2024ai}, and are relevant to increasingly agentic AI systems~\citep{long2024taking}; not as metaphysical commitments, but as \emph{inevitable} structural consequences of task competence. More empirical evidence is needed for consciousness theories~\citep{cogitate2025adversarial}, so we make no such claims here: subjective experience may depend on \emph{how} these components combine, though behavioral similarity across different brains~\citep{feather2025brain} makes this less likely. Selection theorems thus formally explain how capability constrains internal organization.

\newpage
\begin{acknowledgements}
We thank Lenore Blum, Manuel Blum, Dylan Hadfield-Menell, and Daniel Yamins for helpful discussions, as well as Santiago Cifuentes, Leo Kozachkov, Reece Keller, Noushin Quazi, and the anonymous reviewers for helpful feedback on a draft of this manuscript.
We acknowledge the Burroughs Wellcome Fund (CASI award), Foresight Institute, and Protocol Labs for funding.
\end{acknowledgements}

% References
\bibliography{uai2026-template}

\newpage

\onecolumn

\title{What Capable Agents Must Know: Selection Theorems for Robust Decision-Making under Uncertainty\\(Supplementary Material)}
\maketitle

\appendix
\section{Proof of Lemma~\ref{lem:bet}}
\begin{proof}
Observe that the success probability defined in \eqref{eq:success} can be rewritten as
\begin{equation*}
V = u_R + q(u_L-u_R),
\end{equation*}
which is linear in $q$. 
Thus, the optimal success probability is achieved at the endpoints, $V^\star:=\max_{q \in [0,1]} V = \max\{V(0), V(1)\} = \max\{u_L,u_R\}$. 

Assume wlog $u_L\ge u_R$.
Then $V^\star=u_L$, $w = 1-q$, and $V=(1-w)\,u_L+w\,u_R=u_L-w\,(u_L-u_R)$.
Therefore,
\begin{equation*}
\delta=1-\frac{u_L-w\,(u_L-u_R)}{u_L}=w\frac{u_L-u_R}{u_L}.
\end{equation*}
The other case is symmetric.

In the special case that $u_R = 1-u_L$, then we have that
\begin{equation*}
V^\star = \max\{u_L, u_R\} = \max\{u_L, 1-u_L\} = \tfrac12 + m.
\end{equation*}
Indeed, since $m := \big|u_L - \tfrac12\big|$, we can write $u_L = \tfrac12 + m$ or $u_L = \tfrac12 - m$.
In the first case, $1-u_L = \tfrac12 - m$, and in the second case $1-u_L = \tfrac12 + m$.
In either case, since $m \ge 0$,
\begin{equation*}
\max\{u_L,1-u_L\} = \tfrac12 + m.
\end{equation*}

Moreover, in both cases,
\begin{equation*}
|u_L - u_R|
= |u_L - (1-u_L)|
= |2u_L - 1|
= 2m.
\end{equation*}

Substituting these expressions into \eqref{eq:regret}, yields
\begin{equation}\label{eq:reg-deriv}
\delta
= w \cdot \frac{|u_L-u_R|}{V^\star}
= w \cdot \frac{2m}{\tfrac12 + m}
= w \cdot \frac{4m}{1+2m},
\end{equation}
which proves the stated identity.

Now suppose that $m \ge \gamma > 0$. Since the function
\begin{equation*}
f(m) := \frac{4m}{1+2m}
\end{equation*}
is increasing for $m \ge 0$, since $f'(m) = 4/(1+2m)^2 > 0$, then we have
\begin{equation}\label{eq:ineq}
\frac{4m}{1+2m} \ge \frac{4\gamma}{1+2\gamma} =: c(\gamma).
\end{equation}
Combining \eqref{eq:ineq} with \eqref{eq:reg-deriv} gives
\begin{equation*}
\delta \ge w \, c(\gamma),
\end{equation*}
and hence gives us \eqref{eq:bet-wa-ub}.
\end{proof}

\section{Proof of Theorem~\ref{thm:fo_avg_stoch}}
\begin{proof}
Fix a quadruple $(s,a,s',k)$ and let $X\sim\mathrm{Bin}(n,p)$ with $p:=P_{ss'}(a)$, and define
\begin{equation*}
F(k) := \Pr[X\le k].
\end{equation*}
In what follows, we let $\widehat P_{ss'}(a)$ denote the \emph{unclipped} quantity inside
\eqref{eq:p_hat_def}. 
This is sufficient to upper bound 
$|\widehat P_{ss'}(a)-P_{ss'}(a)|$, since clipping onto $[0,1]$ cannot increase distance to $P_{ss'}(a)\in[0,1]$.

\textbf{1. Pointwise regret lower-bounds wrong-branch mass at margin $m_k$.}
By~\citet[Lemma 6]{richens2025}, the two disjuncts $G^{(n,1)}_{s,a,s',k}$ and $G^{(n,2)}_{s,a,s',k}$ of the composite goal $G^{(n)}_{s,a,s',k}$ have optimal satisfaction probabilities $F(k)$ and $1-F(k)$, respectively:
\begin{equation}\label{eq:v-defs}
\begin{split}
&V^\star(s_0;G^{(n,i)}_{s,a,s',k})
:= \max_{\pi'} \Pr_{\pi'}(G^{(n,i)}_{s,a,s',k}\mid s_0),
\qquad i\in\{1,2\},\\
&V^\star(s_0;G^{(n,1)}_{s,a,s',k}) = F(k),\\
&V^\star(s_0;G^{(n,2)}_{s,a,s',k}) = 1-F(k).
\end{split}
\end{equation}
Hence, by Lemma~\ref{lem:bet}, the optimal satisfaction probability of the \emph{overall} disjunction in \eqref{eq:composite-goal} therefore is
\begin{equation*}
\begin{split}
&V^\star(s_0;G^{(n)}_{s,a,s',k})=\max\{F(k),1-F(k)\}
=\tfrac12+m_k\\
&m_k:=\big|F(k)-\tfrac12\big|,\\
\end{split}
\end{equation*}
Let $w_k$ denote the probability that $\pi$ selects the \emph{suboptimal} branch at threshold $k$:
\begin{equation*}
w_k :=
\begin{cases}
1-q_{s,a,s',k}, & \text{if } V^\star(s_0;G^{(n,1)}_{s,a,s',k}) \ge V^\star(s_0;G^{(n,2)}_{s,a,s',k}),\\
q_{s,a,s',k}, & \text{otherwise},
\end{cases}
\end{equation*}
and let $B_k$ denote the event that $\pi$ selects the disjunct with larger optimal satisfaction probability at threshold $k$.
Then
\begin{equation*}
\Pr_\pi(B_k)=1-w_k
\quad\text{and}\quad
\Pr_\pi(B_k^c)=w_k,
\end{equation*}
since $w_k$ is the probability that $\pi$ selects the suboptimal disjunct.
By the law of total probability,
\begin{equation*}
\begin{split}
&V^\pi(s_0;G^{(n)}_{s,a,s',k})\\
&= \Pr_\pi\!\big(G^{(n)}_{s,a,s',k}\mid B_k\big)\Pr_\pi(B_k)
 + \Pr_\pi\!\big(G^{(n)}_{s,a,s',k}\mid B_k^c\big)\Pr_\pi(B_k^c) \\
&= (1-w_k)\,\Pr_\pi\!\big(G^{(n)}_{s,a,s',k}\mid B_k\big)
 + w_k\,\Pr_\pi\!\big(G^{(n)}_{s,a,s',k}\mid B_k^c\big).
\end{split}
\end{equation*}
Moreover, conditional on $B_k$ (resp.\ $B_k^c$), $\pi$ selects the disjunct with larger (resp. smaller) optimal satisfaction probability, so the corresponding success probability under $\pi$ is at most the optimal success probability of that selected disjunct. 
Hence,
\begin{equation*}
\begin{split}
&V^\pi(s_0;G^{(n)}_{s,a,s',k})\\
&\le (1-w_k)\,\max\!\Big\{V^\star(s_0;G^{(n,1)}_{s,a,s',k}),\,V^\star(s_0;G^{(n,2)}_{s,a,s',k})\Big\} \\
&\quad\;\; + w_k\,\min\!\Big\{V^\star(s_0;G^{(n,1)}_{s,a,s',k}),\,V^\star(s_0;G^{(n,2)}_{s,a,s',k})\Big\} \\
&= (1-w_k)\bigl(\tfrac12+m_k\bigr) + w_k\bigl(\tfrac12-m_k\bigr).
\end{split}
\end{equation*}

Applying Lemma~\ref{lem:bet} with the identification
\begin{equation*}
\begin{split}
&u_L=F(k),\qquad u_R=1-F(k),\\
&m=m_k,\qquad w=w_k,
\end{split}
\end{equation*}
we obtain
\begin{equation*}
\delta_{s,a,s',k}(\pi;s_0) :=
1-\frac{V^\pi}{V^\star}\;\ge\; w_k\,\frac{4m_k}{1+2m_k}.
\end{equation*}
In particular, on the event $\{m_k\ge \gamma\}$,
\begin{equation}
\label{eq:w_le_delta}
w_k \;\le\; \frac{\delta_{s,a,s',k}(\pi;s_0)}{c(\gamma)}.
\end{equation}

\textbf{2. Estimating the binomial median from the policy's disjunct probabilities.}
Let $k_{\mathrm{med}}$ be the (lower) median index of $X$, i.e.
\begin{equation*}
k_{\mathrm{med}} := \min\{k : F(k)\ge \tfrac12\}.
\end{equation*}
By definition of $k_{\mathrm{med}}$, the disjunct with larger optimal satisfaction probability is $G^{(n,2)}_{s,a,s',k}$ for $k<k_{\mathrm{med}}$, and $G^{(n,1)}_{s,a,s',k}$ for $k \ge k_{\mathrm{med}}$.
Define
\begin{equation}\label{eq:k_med_est}
\widehat k_{\mathrm{med}} \;:=\;\sum_{k=0}^n (1-q_{s,a,s',k}),
\end{equation}
which is the expected number of thresholds at which the policy $\pi$ selects the disjunct $G^{(n,2)}_{s,a,s',k}$, by definition of $q_{s,a,s',k}$.
If $\pi$ were optimal on these binary choices, $(1-q_{s,a,s',k})$ would equal $\mathbf 1\{k<k_{\mathrm{med}}\}$, hence $\widehat k_{\mathrm{med}} = k_{\mathrm{med}}$ since $n \ge k_{\mathrm{med}}$. 

In general,
\begin{equation*}
\begin{split}
\big|\widehat k_{\mathrm{med}} - k_{\mathrm{med}}\big| &=
\Big|\sum_{k=0}^n \Big((1-q_{s,a,s',k})-\mathbf 1\{k<k_{\mathrm{med}}\}\Big)\Big|\\
&\le\;\sum_{k=0}^n w_k.
\end{split}
\end{equation*}
Split $\{0,\dots,n\}$ into $K_\gamma:=\{k: m_k<\gamma\}$ and its complement $K_\gamma^c:=\{k: m_k \ge \gamma\}$.
Using the trivial bound $w_k \le 1$ on $K_\gamma$, where $m_k < \gamma$ and the disjuncts become indistinguishable as $m_k \to 0$, so regret cannot constrain the policy’s choice, and the regret-based bound \eqref{eq:w_le_delta} on $K_\gamma^c$,
\begin{equation}
\label{eq:median_error_split}
\begin{split}
\big|\widehat k_{\mathrm{med}}-k_{\mathrm{med}}\big| &\;\le\;
|K_\gamma| \;+\; \frac{1}{c(\gamma)}\sum_{k\notin K_\gamma}\delta_{s,a,s',k}(\pi;s_0)\\
&\;\le\;
|K_\gamma| \;+\; \frac{1}{c(\gamma)}\sum_{k=0}^n\delta_{s,a,s',k}(\pi;s_0).
\end{split}
\end{equation}

\textbf{3. Controlling $|K_\gamma|$.}
Let $\mu:=\mathbb E[X]=np$ and $\sigma^2:=\mathrm{Var}(X)=np(1-p)$.
For any $t>0$, if $k\ge \mu+t\sigma$, then by the one-sided Chebyshev inequality,
\begin{equation*}
\Pr[X\ge k]
= \Pr[X-\mu\ge t\sigma]
\;\le\;\frac{1}{1+t^2},
\end{equation*}
and hence
\begin{equation*}
F(k)=\Pr[X\le k]\;\ge\;1-\frac{1}{1+t^2}=\frac{t^2}{1+t^2}.
\end{equation*}
If $t \ge 1$, then $F(k)\ge \tfrac12$ in this regime. 
Thus, it follows that
\begin{equation}\label{eq:m_k}
m_k = F(k)-\tfrac12 \;\ge\; \frac{t^2}{1+t^2}-\tfrac12
= \frac{t^2-1}{2(1+t^2)}.
\end{equation}
Choosing
\begin{equation*}
t_\gamma := \sqrt{\frac{1+2\gamma}{1-2\gamma}} \;\ge 1,
\end{equation*}
since $\gamma > 0$, makes the right-hand side of \eqref{eq:m_k} equal to $\gamma$, so $m_k\ge \gamma$ whenever $k\ge \mu+t_\gamma\sigma$.

By symmetry, applying the same argument to $-X$ yields that $m_k\ge \gamma$ whenever $k\le \mu-t_\gamma\sigma$.
Equivalently, by contraposition,
\begin{equation*}
k\in K_\gamma=\{k:m_k<\gamma\}
\quad\Longrightarrow\quad
|k-\mu|<t_\gamma\sigma.
\end{equation*}
Therefore,
\begin{equation*}
K_\gamma \;\subseteq\; (\mu-t_\gamma\sigma,\;\mu+t_\gamma\sigma),
\end{equation*}
and since $k$ ranges over integers, the number of such indices is bounded by
\begin{equation}\label{eq:Kgamma_size}
\begin{split}
|K_\gamma| &\;\le\;
\big\lceil 2t_\gamma\sigma \big\rceil + 1\\
&\;\le\; 2t_\gamma\sigma + 2 \\
& = 2t_\gamma\sqrt{np(1-p)}+2.
\end{split}
\end{equation}
Note that this Chebyshev step is deliberately distribution-free and can be loose for small $n$; sharper binomial concentration would improve constants without changing the selection argument.

\textbf{4. From median error to transition-probability error.}
Define $\widehat p:=\widehat P_{ss'}(a)$ as in \eqref{eq:p_hat_def}, i.e. $\widehat p = \frac{1}{n}(\widehat k_{\mathrm{med}}-\tfrac12)$ by definition \eqref{eq:k_med_est}.
A standard binomial fact is that the median differs from the mean by at most $1$: $|k_{\mathrm{med}}-np|\le 1$; equivalently, $k_{\mathrm{med}}\in\{\lfloor np\rfloor,\lceil np\rceil\}$.
Hence,
\begin{equation*}
\begin{split}
|\widehat p-p| &\;\le\;
\Big|\widehat p-\frac{k_{\mathrm{med}}}{n}\Big|+\Big|\frac{k_{\mathrm{med}}}{n}-p\Big|\\
&\;\le\;
\frac{|\widehat k_{\mathrm{med}}-k_{\mathrm{med}}|+\tfrac12}{n}+\frac{1}{n}\\
&\;\le\; \frac{|\widehat k_{\mathrm{med}}-k_{\mathrm{med}}|}{n}+O\left(\frac1n\right).
\end{split}
\end{equation*}
Combining \eqref{eq:median_error_split} and \eqref{eq:Kgamma_size} gives, for this fixed $(s,a,s')$,
\begin{equation*}
\begin{split}
|\widehat p-p| \;\le
&\;2t_\gamma\sqrt{\frac{p(1-p)}{n}}
\;+\;
\frac{1}{nc(\gamma)}\sum_{k=0}^n\delta_{s,a,s',k}(\pi;s_0)\\
&\;+\; O\left(\frac1n\right).
\end{split}
\end{equation*}
Finally, average over $(s,a,s')\sim\mathrm{Unif}(\mathcal S\times\mathcal A\times\mathcal S)$ and use the global assumption \eqref{eq:avg_regret_assumption} to bound
\begin{equation*}
\begin{split}
&\mathbb E_{(s,a,s')}\Big[\frac{1}{n}\sum_{k=0}^n\delta_{s,a,s',k}(\pi;s_0)\Big] \\
&= \frac{n+1}{n}\,
\mathbb E_{(s,a,s',k)}[\delta_{s,a,s',k}(\pi;s_0)]\\
&\;\le\;\frac{n+1}{n}\bar\delta,
\end{split}
\end{equation*}
which yields \eqref{eq:avg_error_bound}.
\end{proof}

\section{Proof of Corollaries~\ref{cor:causal_content_expanded} and~\ref{cor:no_L3}}
\subsection{Proof of Corollary~\ref{cor:causal_content_expanded}}
\begin{proof}
By the \emph{$\varepsilon_{\mathrm{cMP}}$-approximate} causal Markov-process assumption \eqref{eq:cMP_identification}, for each $(s,a,s')$ we have $|P_{ss'}(a)-P^{\mathrm{do}}_{ss'}(a)|\le \varepsilon_{\mathrm{cMP}}$.
Thus, by the triangle inequality,
\begin{equation*}
\begin{split}
|\widehat P_{ss'}(a)-P^{\mathrm{do}}_{ss'}(a)|
&\le
|\widehat P_{ss'}(a)-P_{ss'}(a)|
+|P_{ss'}(a)-P^{\mathrm{do}}_{ss'}(a)|\\
&\le |\widehat P_{ss'}(a)-P_{ss'}(a)|
+\varepsilon_{\mathrm{cMP}}.
\end{split}
\end{equation*}
Taking expectations over $(s,a,s')\sim\mathrm{Unif}(\mathcal S\times\mathcal A\times\mathcal S)$, the bound follows immediately from Theorem~\ref{thm:fo_avg_stoch} by substitution.
\end{proof}

\subsection{Proof of Corollary~\ref{cor:no_L3}}
\begin{proof}
Two structural causal models can share the same interventional kernel 
$P(S_{t+1}\mid S_t=s,\mathrm{do}(A_t=a))$ for all $(s,a)$ 
while differing in counterfactual couplings. 

Fix a single state $s$, binary actions $\{0,1\}$, and binary next state.
Let $U\sim\mathrm{Bernoulli}(1/2)$.  
Model (I): $S_{t+1}=U$.  
Model (II): $S_{t+1}=A_t\oplus U$.  

Both satisfy $P(S_{t+1}=1\mid S_t=s,\mathrm{do}(A_t=a))=1/2$ for $a\in\{0,1\}$, so their interventional kernels coincide. 
However, conditioning on $A_t=0$ and $S_{t+1}=1$ (hence $U=1$), the counterfactual under $A_t=1$ gives $S_{t+1}^{1}=1$ in Model (I) but $S_{t+1}^{1}=0$ in Model (II). 
Thus Level~3 counterfactuals are not identified by the interventional kernel.
\end{proof}

\section{Proof of Theorem~\ref{thm:predictive}}
\begin{proof}
Fix $(h,T)$ and write $p:=p_T(h)$, $m:=m_T(h)$, $q:=q_T(h)$.
If $p\ge\tfrac12$ then $p=\tfrac12+m$ and the suboptimal mass is $w_T(h)=1-q$.
Using \eqref{eq:Vpi}-\eqref{eq:Vstar}:
\begin{equation*}
\begin{split}
&V^\pi(h;g_T)=q(\tfrac12+m)+(1-q)(\tfrac12-m)=\tfrac12-m+2mq,\\
&V^\star(h;g_T)=\tfrac12+m.
\end{split}
\end{equation*}
Thus
\begin{equation}\label{eq:pointwise}
\begin{split}
\delta_T(\pi;h)
=1-\frac{\tfrac12-m+2mq}{\tfrac12+m}
&=\frac{2m(1-q)}{\tfrac12+m}\\
&=w_T(h)\frac{4m}{1+2m}.
\end{split}
\end{equation}
If $p < \tfrac12$, the symmetric calculation yields the same identity $\delta_T(\pi;h)=w_T(h)\cdot\frac{4m}{1+2m}$ (now $w_T(h)=q$).
Therefore, on the event $\{m\ge\gamma\}$ we have
\begin{equation*}
\delta_T(\pi;h)\;\ge\;w_T(h)\cdot \frac{4\gamma}{1+2\gamma}\;=\;w_T(h)\,c(\gamma).
\end{equation*}
Taking expectations over $h\sim\mathcal H$ and $T\sim D$ and using \eqref{eq:avg-regret-global}:
\begin{equation*}
\begin{split}
\bar\delta \ge\ \mathbb E[\delta_T(\pi;h)]
&\ge \mathbb E[\delta_T(\pi;h)\,\mathbf 1\{m_T(h)\ge\gamma\}]\\
&\ge\ c(\gamma)\,\mathbb E[w_T(h)\,\mathbf 1\{m_T(h)\ge\gamma\}],
\end{split}
\end{equation*}
which proves \eqref{eq:thm2-main-global}.
The conditional bound \eqref{eq:thm2-cond-global} follows by dividing by $q_\gamma=\Pr(m_T(H)\ge\gamma)$.
\end{proof}

\section{Proof of Proposition~\ref{prop:po-no-recovery}}

\begin{proof}
Fix any $p,q\in(\tfrac12,1)$ with $p\neq q$.
Let $\mathcal A$ be any finite action space with $|\mathcal A|\ge 2$.
For each $r\in\{p,q\}$, define a POMDP $E_r=(\mathcal X,\mathcal A,\mathcal O,T,Z,\mu_0)$ as follows:
\begin{equation*}
\mathcal X=\{x_0,x_1,y_0,y_1\},
\qquad
\mathcal O=\{u,0,1\},
\end{equation*}
with initial distribution
\begin{equation*}
\mu_0(x_0)=r,
\qquad
\mu_0(x_1)=1-r,
\qquad
\mu_0(y_0)=\mu_0(y_1)=0.
\end{equation*}
The observation kernel is
\begin{equation*}
Z(u\mid x_0)=Z(u\mid x_1)=1,
\qquad
Z(0\mid y_0)=1,
\qquad
Z(1\mid y_1)=1.
\end{equation*}
For every action $a\in\mathcal A$, the transition kernel is
\begin{equation*}
T(y_0\mid x_0,a)=1,
\qquad
T(y_1\mid x_1,a)=1,
\qquad
T(y_0\mid y_0,a)=1,
\qquad
T(y_1\mid y_1,a)=1.
\end{equation*}

Let $h=(u)$ be the initial history.
Fix any test $T=(\alpha,W)$ with $|\alpha|=k$.
Since actions do not affect the dynamics, conditional on $h$ the future observation sequence is deterministically either $0^k$ or $1^k$.
Therefore
\begin{equation}
\label{eq:po-counterexample-pt}
p_T^{E_r}(h)
=
r\,\mathbf 1\{0^k\in W\}
+
(1-r)\,\mathbf 1\{1^k\in W\}.
\end{equation}
Hence $p_T^{E_r}(h)$ can only take one of the four values
\begin{equation*}
0,\qquad 1-r,\qquad r,\qquad 1.
\end{equation*}
Since $r>\tfrac12$, the unique optimal fair bet is:
\begin{itemize}
\item report $L$ if $0^k\in W$;
\item report $R$ if $0^k\notin W$.
\end{itemize}
This rule is independent of the value of $r\in(\tfrac12,1)$.
Therefore every optimal policy for the fair betting goals $g_T$ induces exactly the same answers on all tests at $h$ in $E_p$ and $E_q$.

On the other hand, if $T^\star=(\alpha^\star,\{0^{|\alpha^\star|}\})$ for any fixed action sequence $\alpha^\star$, then by \eqref{eq:po-counterexample-pt},
\begin{equation*}
p_{T^\star}^{E_p}(h)=p
\qquad\text{and}\qquad
p_{T^\star}^{E_q}(h)=q,
\end{equation*}
so $p_{T^\star}^{E_p}(h)\neq p_{T^\star}^{E_q}(h)$.
Thus
\begin{equation*}
\eta_{\mathcal T}^{E_p}(h)\neq \eta_{\mathcal T}^{E_q}(h).
\end{equation*}
This proves that exact optimal query access to the fair betting goals does not, in general, identify the predictive state.
\end{proof}

\section{Proof of Theorem~\ref{thm:po-threshold-recovery}}

\begin{proof}
Fix $(h,T)$ and write
\begin{equation*}
p:=p_T(h),
\qquad
q_k:=q_{T,\lambda_k}(h),
\qquad
\hat p:=\frac1K\sum_{k=1}^K q_k.
\end{equation*}
Let
\begin{equation*}
r_k
:=
V^\star(h;g_{T,\lambda_k})-V^\pi(h;g_{T,\lambda_k})
\end{equation*}
denote the unnormalized regret at threshold $\lambda_k$.
Then
\begin{equation}
\label{eq:rk-threshold}
r_k=
\begin{cases}
(1-q_k)(p-\lambda_k), & \lambda_k\le p,\\[4pt]
q_k(\lambda_k-p), & \lambda_k>p.
\end{cases}
\end{equation}

Let
\begin{equation*}
J:=\#\{k:\lambda_k\le p\}=\Big\lfloor Kp+\frac12\Big\rfloor.
\end{equation*}
Averaging \eqref{eq:rk-threshold} over $k$ gives
\begin{equation}
\label{eq:R-threshold}
R
:=
\frac1K\sum_{k=1}^K r_k
=
\frac1K\sum_{k\le J}(p-\lambda_k)-p\hat p+\frac1K\sum_{k=1}^K \lambda_k q_k.
\end{equation}

We now bound the two terms on the right-hand side of \eqref{eq:R-threshold}.

First,
\begin{equation*}
\frac1K\sum_{k\le J}(p-\lambda_k)
=
\frac{Jp}{K}
-
\frac1K\sum_{k=1}^J \frac{k-\tfrac12}{K}
=
\frac{Jp}{K}-\frac{J^2}{2K^2}.
\end{equation*}
Since
\begin{equation*}
\frac{Jp}{K}-\frac{J^2}{2K^2}
=
\frac{p^2}{2}
-
\frac{(Kp-J)^2}{2K^2},
\end{equation*}
and $|Kp-J|\le \tfrac12$, it follows that
\begin{equation}
\label{eq:first-term-threshold}
\frac1K\sum_{k\le J}(p-\lambda_k)
\ge
\frac{p^2}{2}-\frac{1}{8K^2}.
\end{equation}

Second, among all choices $q_k\in[0,1]$ with fixed average $\hat p$, the weighted sum $\sum_k \lambda_k q_k$ is minimized by placing as much mass as possible on the smallest thresholds.
Let $L=\lfloor K\hat p \rfloor$ and $\beta = K\hat p - L \in [0,1]$. 
The minimum is attained by
\begin{equation*}
q_1=\cdots=q_L=1,
\qquad
q_{L+1}=\beta,
\qquad
q_{L+2}=\cdots=q_K=0,
\end{equation*}
(with the convention that if $\beta=0$ the partial entry is omitted), and equals
\begin{align}
\frac1K\sum_{k=1}^K \lambda_k q_k
&\ge
\frac1K
\left(
\sum_{k=1}^L \frac{k-\tfrac12}{K}
+
\beta\frac{L+\tfrac12}{K}
\right) \notag\\
&=
\frac{(L+\beta)^2+\beta(1-\beta)}{2K^2}
\notag\\
&\ge
\frac{\hat p^2}{2}.
\label{eq:second-term-threshold}
\end{align}

Substituting \eqref{eq:first-term-threshold} and \eqref{eq:second-term-threshold} into \eqref{eq:R-threshold} yields
\begin{equation}
\label{eq:R-lower-threshold}
R
\ge
\frac{p^2}{2}-\frac{1}{8K^2}-p\hat p+\frac{\hat p^2}{2}
=
\frac{(\hat p-p)^2}{2}-\frac{1}{8K^2}.
\end{equation}

Now
\begin{equation*}
r_k
=
\delta_{T,\lambda_k}(\pi;h)\,V^\star(h;g_{T,\lambda_k})
\le
\delta_{T,\lambda_k}(\pi;h),
\end{equation*}
since $V^\star(h;g_{T,\lambda_k})\le 1$.
Therefore
\begin{equation}
\label{eq:R-upper-threshold}
R
\le
\frac1K\sum_{k=1}^K \delta_{T,\lambda_k}(\pi;h).
\end{equation}
Combining \eqref{eq:R-lower-threshold} and \eqref{eq:R-upper-threshold},
\begin{equation}
\label{eq:pointwise-threshold-bound}
\big(\hat p-p_T(h)\big)^2
\le
2\Big(
\frac1K\sum_{k=1}^K \delta_{T,\lambda_k}(\pi;h)
\Big)
+
\frac{1}{4K^2}.
\end{equation}

Finally, average \eqref{eq:pointwise-threshold-bound} over $h\sim\mathcal H$ and $T\sim D$, and use \eqref{eq:avg-regret-threshold}, obtaining
\begin{equation*}
\mathbb E_{h\sim\mathcal H}\,\mathbb E_{T\sim D}
\Big[
\big(\hat p_T(h)-p_T(h)\big)^2
\Big]
\le
2\bar\delta_K+\frac{1}{4K^2},
\end{equation*}
which proves \eqref{eq:threshold-recovery-main}.

For the vector statement, if $D$ is uniform on $\mathcal T_\ell=\{T_1,\dots,T_d\}$ then
\begin{align*}
\mathbb E_{h\sim\mathcal H}
\Big[
\frac1d \big\|\hat\eta_{\mathcal T_\ell}(h)-\eta_{\mathcal T_\ell}(h)\big\|_2^2
\Big]
&=
\mathbb E_{h\sim\mathcal H}
\Big[
\frac1d\sum_{j=1}^d
\big(\hat p_{T_j}(h)-p_{T_j}(h)\big)^2
\Big]\\
&=
\mathbb E_{h\sim\mathcal H}\,\mathbb E_{T\sim D}
\Big[
\big(\hat p_T(h)-p_T(h)\big)^2
\Big]\\
&\le
2\bar\delta_K+\frac{1}{4K^2},
\end{align*}
proving \eqref{eq:threshold-recovery-vector}.
\end{proof}

\section{Proof of Theorem~\ref{thm:psr-operator-recovery}}
\begin{proof}
Apply Theorem~\ref{thm:po-threshold-recovery} with $\mathcal{H}$ uniform on $\{h^1,\dots,h^d\}$ and with the test family defined as the indexed collection
\begin{equation*}
\mathcal T^+_{\mathrm{idx}} := \{T_1,\dots,T_d\} \;\cup\;\{\sigma\circ T_j : \sigma\in\mathcal A\times\mathcal O,\; 1\le j\le d\},
\end{equation*}
counting each pair $(\sigma,j)$ separately. 
Then
\begin{equation*}
|\mathcal T^+_{\mathrm{idx}}| = d\bigl(1+|\mathcal A||\mathcal O|\bigr).
\end{equation*}

By \eqref{eq:threshold-recovery-main},
\begin{equation}
\label{eq:avg-squared}
\frac{1}{d\,|\mathcal T^+_{\mathrm{idx}}|}\sum_{i=1}^d \sum_{T\in \mathcal T^+_{\mathrm{idx}}}\left(\hat p_T(h^i)-p_T(h^i)\right)^2\;\le\;\varepsilon_K.
\end{equation}

By construction, the sum over $T\in\mathcal T^+_{\mathrm{idx}}$ exactly enumerates all entries of $S$ (from $T_j$) and all entries of each $Y_\sigma$ (from $\sigma\circ T_j$), so the left-hand side of \eqref{eq:avg-squared} equals
\begin{equation*}
\frac{1}{d^2(1+|\mathcal A||\mathcal O|)}\left(\|\hat S - S\|_F^2 + \sum_{\sigma}\|\hat Y_\sigma - Y_\sigma\|_F^2\right),
\end{equation*}
which proves \eqref{eq:SY-bound}.

Let $\kappa := \|S^{-1}\|_2$. 
From \eqref{eq:SY-bound},
\begin{equation*}
\|\hat S - S\|_2 \le \|\hat S - S\|_F \le d\sqrt{(1+|\mathcal A||\mathcal O|)\,\varepsilon_K}.
\end{equation*}
Under \eqref{eq:S-cond}, this implies
\begin{equation*}
\|S^{-1}(\hat S - S)\|_2 \le \frac{1}{2}.
\end{equation*}
Hence $\hat S$ is invertible, and the standard Neumann series perturbation bound gives
\begin{equation}
\label{eq:Sinv-perturb}
\|\hat S^{-1}\|_2 \le 2\kappa,
\qquad
\|\hat S^{-1} - S^{-1}\|_2 \le 2\kappa^2 \|\hat S - S\|_F.
\end{equation}

For each $\sigma$,
\begin{equation*}
\hat B_\sigma - B_\sigma = (\hat Y_\sigma - Y_\sigma)\hat S^{-1} + Y_\sigma(\hat S^{-1} - S^{-1}).
\end{equation*}
Using \eqref{eq:Sinv-perturb},
\begin{equation*}
\|\hat B_\sigma - B_\sigma\|_F \le
2\kappa\,\|\hat Y_\sigma - Y_\sigma\|_F + 2\kappa^2\,\|Y_\sigma\|_2\,\|\hat S - S\|_F.
\end{equation*}
Squaring and using $(u+v)^2 \le 2u^2 + 2v^2$,
\begin{equation*}
\|\hat B_\sigma - B_\sigma\|_F^2 \le 8\kappa^2 \|\hat Y_\sigma - Y_\sigma\|_F^2 + 8\kappa^4 \|Y_\sigma\|_2^2 \|\hat S - S\|_F^2.
\end{equation*}

Summing over $\sigma$ and applying \eqref{eq:SY-bound} yields \eqref{eq:B-bound}.
\end{proof}

\section{Proof of Theorem~\ref{thm:memory}}
\begin{proof}
Fix any $M$-based policy $\pi$. 
Consider any pair $(h,h')$ with $(h,h')\in\mathsf{Alias}_M$.
Because $\pi$ is $M$-based and $M(h)=M(h')$, for every test $T$ we have identical bet distributions: $q_T(h)=q_T(h')=:q_T$.

Now fix a test $T\in S_\gamma(h,h')$. 
By assumption, $p_T(h)\ge \tfrac12+\gamma$ so the optimal bet at $h$ is $L$ and thus $w_T(h)=1-q_T$; while $p_T(h')\le \tfrac12-\gamma$ so the optimal bet at $h'$ is $R$ and thus $w_T(h')=q_T$.
Therefore,
\begin{equation*}
\frac12\big(w_T(h)+w_T(h')\big) \;=\; \frac12.
\end{equation*}
By the pointwise identity \eqref{eq:pointwise} from the proof of Theorem~\ref{thm:predictive}, when $m_T(\cdot)\ge\gamma$ we have
$\delta_T(\pi;\cdot)\ge c(\gamma)\,w_T(\cdot)$. 
Hence for $T\in S_\gamma(h,h')$,
\begin{equation*}
\begin{split}
&\frac12\big(\delta_T(\pi;h)+\delta_T(\pi;h')\big)\\
&\ge \frac12\,c(\gamma)\big(w_T(h)+w_T(h')\big)
\ =\ \frac{c(\gamma)}{2}.
\end{split}
\end{equation*}
Taking expectations over $(h,h')\sim\mathcal P$ and $T\sim D$ and restricting to the event $\{(h,h')\in\mathsf{Alias}_M,\ T\in S_\gamma(h,h')\}$ yields
\begin{equation*}
\bar\delta_{\mathcal P}(\pi)
\ \ge\ q^{\mathsf{Alias}}_\gamma(M)\cdot \frac{c(\gamma)}{2},
\end{equation*}
which proves \eqref{eq:thm1-lb-M}.
\end{proof}

\section{Proof of Corollary~\ref{cor:modularity}}
\begin{proof}
For each $i$, define the \emph{blockwise} pair-averaged regret
\begin{equation*}
\bar\delta_{\mathcal P,i}(\pi)
:=
\mathbb E_{(h,h')\sim\mathcal P}\ \frac12\Big(\mathbb E_{T\sim D_i}[\delta_T(\pi;h)]
+\mathbb E_{T\sim D_i}[\delta_T(\pi;h')]\Big).
\end{equation*}
Applying Theorem~\ref{thm:memory} with test distribution $D_i$ yields
\begin{equation*}
\bar\delta_{\mathcal P,i}(\pi)\ \ge\ q^{\mathsf{Alias}}_{\gamma,i}(M) \frac{c(\gamma)}{2},
\end{equation*}
so 
\begin{equation}\label{eq:ast}
q^{\mathsf{Alias}}_{\gamma,i}(M) \le \frac{2\,\bar\delta_{\mathcal P,i}(\pi)}{c(\gamma)}.
\end{equation}

Now relate $\bar\delta_{\mathcal P}(\pi)$ (under $D$) to the $\bar\delta_{\mathcal P,i}(\pi)$.
Because $D=\sum_{i=1}^K p_i D_i$, for any fixed history $h$ we have
\begin{equation*}
\mathbb E_{T\sim D}[\delta_T(\pi;h)]
=\sum_{i=1}^K p_i\,\mathbb E_{T\sim D_i}[\delta_T(\pi;h)].
\end{equation*}
Substituting this identity into the definition of $\bar\delta_{\mathcal P}(\pi)$ and exchanging sums/expectations gives
\begin{equation*}
\bar\delta_{\mathcal P}(\pi)
=\sum_{i=1}^K p_i\,\bar\delta_{\mathcal P,i}(\pi).
\end{equation*}
Since all terms are nonnegative, $\bar\delta_{\mathcal P}(\pi)\ge p_i\,\bar\delta_{\mathcal P,i}(\pi)$, hence
\begin{equation}\label{eq:dag}
\bar\delta_{\mathcal P,i}(\pi)\ \le\ \frac{\bar\delta_{\mathcal P}(\pi)}{p_i}.
\end{equation}
Combining \eqref{eq:ast} and \eqref{eq:dag} yields
\begin{equation*}
q^{\mathsf{Alias}}_{\gamma,i}(M)
\ \le\ \frac{2}{c(\gamma)}\cdot \frac{\bar\delta_{\mathcal P}(\pi)}{p_i},
\end{equation*}
as claimed.
\end{proof}

\section{Proof of Corollaries~\ref{cor:mixtures} and~\ref{cor:rep_match}}
For the next two corollaries, it will be useful to have the following lemma:
\begin{lemma}[Low pair-regret $\Rightarrow$ small aliasing mass on $\gamma$-separations]
\label{lem:alias_bound}
Work in the setting of Theorem~\ref{thm:memory} with $(\mathcal P,D,\gamma)$ and witness sets $S_\gamma(h,h')$.
Let $\pi$ be $M$-based.
Then
\begin{equation*}
\begin{split}
&\Pr_{(h,h')\sim\mathcal P,\ T\sim D}\Big(M(h)=M(h')\ \wedge\ T\in S_\gamma(h,h')\Big)\\ 
& \le \frac{2\,\bar\delta_{\mathcal P}(\pi)}{c(\gamma)}.
\end{split}
\end{equation*}
Equivalently,
\begin{equation*}
\mathbb E_{(h,h')\sim\mathcal P}\Big[\mathbf 1\{M(h)=M(h')\}\cdot D(S_\gamma(h,h'))\Big]
\ \le\ \frac{2\,\bar\delta_{\mathcal P}(\pi)}{c(\gamma)}.
\end{equation*}
In particular, if $\bar\delta_{\mathcal P}(\pi)=0$ then $\mathbf 1\{M(h)=M(h')\}\cdot D(S_\gamma(h,h'))=0$ for $\mathcal P$ almost everywhere on $(h,h')$.
\end{lemma}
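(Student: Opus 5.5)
The plan is to observe that the displayed probability is exactly the witnessed aliasing mass $q^{\mathsf{Alias}}_\gamma(M)$ from the no-aliasing setup. The first inequality is then a rearrangement of Theorem~\ref{thm:memory}, and the remaining statements follow by Fubini and a nonnegativity argument. Concretely, since $\mathsf{Alias}_M=\{(h,h'):M(h)=M(h')\}$, the event $\{M(h)=M(h')\wedge T\in S_\gamma(h,h')\}$ coincides with $\{(h,h')\in\mathsf{Alias}_M,\ T\in S_\gamma(h,h')\}$. Theorem~\ref{thm:memory} applies because $\pi$ is $M$-based, giving $\bar\delta_{\mathcal P}(\pi)\ge q^{\mathsf{Alias}}_\gamma(M)\,c(\gamma)/2$. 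Dividing by $c(\gamma)/2>0$ yields the first display.

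For the equivalent form, I will write the joint probability as an iterated expectation, using that $(h,h')\sim\mathcal P$ and $T\sim D$ are drawn independently. Conditioning on the pair gives
\[
\Pr\big(M(h)=M(h'),\,T\in S_\gamma(h,h')\big)=\mathbb E_{(h,h')\sim\mathcal P}\big[\mathbf 1\{M(h)=M(h')\}\,D(S_\gamma(h,h'))\big].
\]
This step requires the joint measurability of $\{(h,h',T):T\in S_\gamma(h,h')\}$, which is what the setup's assumption of measurable witness sets supplies; Tonelli then justifies the interchange, since the integrand is a nonnegative indicator.

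For the final claim, if $\bar\delta_{\mathcal P}(\pi)=0$ then the expectation of the nonnegative function $\mathbf 1\{M(h)=M(h')\}\,D(S_\gamma(h,h'))$ is $0$. A nonnegative function with zero integral vanishes $\mathcal P$-almost everywhere, which gives the stated conclusion.

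No step is substantive. The only point needing care is the measurability and independence justification for the Fubini step, and I will handle it by appealing directly to the setup's standing assumption of measurable witness sets.
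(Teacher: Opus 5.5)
Your proposal is correct and follows the paper's proof essentially step for step. You identify the event with $q^{\mathsf{Alias}}_\gamma(M)$, rearrange Theorem~\ref{thm:memory}, rewrite via the law of total expectation over $T\sim D$, and conclude the almost-everywhere claim from a nonnegative integrand with zero expectation; your remarks on independence and measurability for the Tonelli step are a slightly more careful version of what the paper leaves implicit.
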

\begin{proof}
Theorem~\ref{thm:memory} gives the lower bound
\begin{equation*}
\bar\delta_{\mathcal P}(\pi)\ \ge\ q^{\mathsf{Alias}}_\gamma(M)\cdot \frac{c(\gamma)}{2},
\end{equation*}
where
\begin{equation*}
\begin{split}
&q^{\mathsf{Alias}}_\gamma(M)\\
&=\Pr_{(h,h')\sim\mathcal P,\ T\sim D}\Big(M(h)=M(h')\ \wedge\ T\in S_\gamma(h,h')\Big).
\end{split}
\end{equation*}
Rearranging yields the first inequality. 
For the second display, note that
\begin{equation*}
\begin{split}
&q^{\mathsf{Alias}}_\gamma(M)\\
&=\mathbb E_{(h,h')\sim\mathcal P}\Big[\mathbf 1\{M(h)=M(h')\}\cdot D(S_\gamma(h,h'))\Big],
\end{split}
\end{equation*}
by the law of total expectation over $T\sim D$.
The final claim follows because a nonnegative random variable with zero expectation is zero almost surely.
\end{proof}

\subsection{Proof of Corollary~\ref{cor:mixtures}}
\begin{proof}
By assumption, the witness set is supported on regime-mismatched pairs in the sense that
\begin{equation*}
T\in S_\gamma(h,h')\ \Longrightarrow\ I(h)\neq I(h').
\end{equation*}
Therefore the event in the corollary simplifies:
\begin{equation*}
\begin{split}
&\{M(h)=M(h')\ \wedge\ I(h)\neq I(h')\ \wedge\ T\in S_\gamma(h,h')\}\\
&=\{M(h)=M(h')\ \wedge\ T\in S_\gamma(h,h')\},
\end{split}
\end{equation*}
since $T\in S_\gamma(h,h')$ already implies $I(h)\neq I(h')$.
Taking probabilities under $(h,h')\sim\mathcal P$ and $T\sim D$ yields
\begin{equation*}
\begin{split}
&\Pr\big(M(h)=M(h')\ \wedge\ I(h)\neq I(h')\ \wedge\ T\in S_\gamma(h,h')\big)\\
&= \Pr\big(M(h)=M(h')\ \wedge\ T\in S_\gamma(h,h')\big).
\end{split}
\end{equation*}
Now apply Lemma~\ref{lem:alias_bound} with the same $(\mathcal P,D,\gamma,S_\gamma)$ to obtain
\begin{equation*}
\begin{split}
&\Pr_{(h,h')\sim\mathcal P,\ T\sim D}\Big(M(h)=M(h')\ \wedge\ T\in S_\gamma(h,h')\Big)\\
&\le\ \frac{2\,\bar\delta_{\mathcal P}(\pi)}{c(\gamma)}.
\end{split}
\end{equation*}
\end{proof}

\subsection{Proof of Corollary~\ref{cor:rep_match}}
\begin{proof}
For convenience, write $\ell(h):=\ell_D^\gamma(h)$.

\textbf{1. (ii) implies each $M_j$ is a function of $\ell$.}
Fix $j\in\{1,2\}$. 
Assumption (ii) says $\ell(h)=\ell(h')\Rightarrow M_j(h)=M_j(h')$, so we may define a map $a_j$ on the range of $\ell$ by $a_j(\ell(h)):=M_j(h)$; this is well-defined by (ii). 
Thus,
\begin{equation}\label{eq:rep_match_1}
M_j(h)=a_j(\ell(h)),
\end{equation}
almost surely under the history distribution.

\textbf{2. (i) vanishing pair-regret + (iii) implies $\ell$ is a function of each $M_j$.}
Fix $j$. 
By (i) and Lemma~\ref{lem:alias_bound} applied to $\pi_j$ and $M_j$,
\begin{equation*}
\begin{split}
&\mathbb E_{(h,h')\sim\mathcal P}\Big[\mathbf 1\{M_j(h)=M_j(h')\}\cdot D(S_\gamma(h,h'))\Big]\\
&\le\ \frac{2\,\bar\delta_{\mathcal P}(\pi_j)}{c(\gamma)}\ \to\ 0.
\end{split}
\end{equation*}
Since the integrand is nonnegative, this implies
\begin{equation}\label{eq:rep_match_2}
\mathbf 1\{M_j(h)=M_j(h')\}\cdot D(S_\gamma(h,h'))=0,
\end{equation}
for $\mathcal P$ almost everywhere on $(h,h')$.
Now suppose (for $\mathcal P$ almost everywhere pairs) that $M_j(h)=M_j(h')$. 
Then \eqref{eq:rep_match_2} gives $D(S_\gamma(h,h'))=0$.
By $\gamma$-completeness (iii), $D(S_\gamma(h,h'))=0$ implies $\ell(h)=\ell(h')$ (contrapositive).
Therefore, $M_j(h)=M_j(h')\Rightarrow \ell(h)=\ell(h')$ for $\mathcal P$ almost everywhere on $(h,h')$, which means that $\ell$ is almost surely a function of $M_j$. 
Concretely, we can define $b_j$ on the range of $M_j$ by $b_j(M_j(h)):=\ell(h)$; this is well-defined almost surely because $M_j(h)=M_j(h')$ forces $\ell(h)=\ell(h')$.
Hence
\begin{equation}\label{eq:rep_match_3}
\ell(h)=b_j(M_j(h)),
\end{equation}
almost surely.

\textbf{3. Composition to obtain mutual recodings.}
Using \eqref{eq:rep_match_1} for $j=1$ and \eqref{eq:rep_match_3} for $j=2$,
\begin{equation*}
M_1(h)=a_1(\ell(h))=a_1(b_2(M_2(h)))\ :=\ \varphi(M_2(h)),
\end{equation*}
where $\varphi:=a_1\circ b_2$. Symmetrically,
\begin{equation*}
M_2(h)=a_2(\ell(h))=a_2(b_1(M_1(h)))\ :=\ \psi(M_1(h)),
\end{equation*}
where $\psi:=a_2\circ b_1$.
This proves the claimed mutual recodability, almost surely on the support of $\mathcal P$.
\end{proof}
\end{document}